\newtheorem*{mytheorem*}{Theorem}
\newtheorem{mytheorem}{Theorem}
\newtheorem{mydef}{Definition}
\newtheorem{mylemma}{Lemma}
\newtheorem*{mylemma*}{Lemma}
\newtheorem{examp}{Example}[section]
\newcommand{\A}{\mathcal{A}}
\newcommand{\B}{\mathcal{B}}
\newcommand{\X}{\mathcal{X}}
\newcommand{\Y}{\mathcal{Y}}
\newcommand{\Le}{\mathcal{L}}
\newcommand{\E}{\mathbb{E}}
\newcommand{\Prob}{\mathbb{P}}
\newcommand{\ml}[2]{\mathcal{L}\left(#1  \!\!  \to  \!\!   #2\right)} 
\begin{document}
\addtocounter{page}{-1}

\title{A New Approach to Adaptive Data Analysis and Learning via Maximal Leakage}

\author{\name Amedeo Roberto Esposito \email amedeo.esposito@epfl.ch \\
       \addr School of Computer and Communication Sciences \\
       EPFL
       \AND Michael Gastpar \email michael.gastpar@epfl.ch \\
       \addr School of Computer and Communication Sciences \\
       EPFL
       \AND
       \name Ibrahim Issa \email ibrahim.issa@aub.edu.lb \\
       \addr Electrical and Computer Engineering Department \\
       American University of Beirut}

\maketitle
\thispagestyle{empty}

\begin{abstract}
	There is an increasing concern that most current published research findings are false. The main cause seems to lie in the fundamental disconnection between theory and practice in data analysis. While the former typically relies on statistical independence, the latter is an inherently adaptive process: new hypotheses are formulated based on the outcomes of previous analyses. A recent line of work tries to mitigate these issues by enforcing constraints, such as differential privacy, that compose adaptively while degrading gracefully and thus provide statistical guarantees even in adaptive contexts. 
	Our contribution consists in the introduction of a new approach, based on the concept of Maximal Leakage, an information-theoretic measure of leakage of information.
	The main result allows us to compare the probability of an event happening when adaptivity is considered with respect to the non-adaptive scenario. The bound we derive represents a generalization of the bounds used in non-adaptive scenarios (e.g., McDiarmid's inequality for $c$-sensitive functions, false discovery error control via significance level, etc.), and allows us to replicate or even improve, in certain regimes, the results obtained using Max-Information or Differential Privacy. 
	
	In contrast with the line of work started by Dwork et al., our results do not rely on Differential Privacy but are, in principle, applicable to every algorithm that has a bounded leakage, including the differentially private algorithms and the ones with a short description length.
\end{abstract}

\begin{keywords}
	Data Analysis, Adaptivity, Differential Privacy, Maximal Leakage, Max-Information
\end{keywords}

\newpage

\section{Introduction}
\label{sec:intro}

There is an increasing concern that most current published research findings are false~\citep{falseResearch,statistCrisis}. This ``crisis'' is mainly due to the difficulty of analyzing large amounts of data. In particular, the statistical inference theory typically assumes that the tests/procedures to be performed are selected before the data are gathered, i.e., statistical independence. By contrast, the data analysis practice is an inherently adaptive process: new hypotheses and analyses are formulated based on the outcomes of previous tests on the same data. 

To circumvent this issue, one could collect fresh samples of data for every new test performed, but this is often expensive and impractical. Alternatively, one could naively divide the original dataset into smaller subsets, and apply each new test to a new subset. However, this severely limits the amount of data available for each test, which in turn negatively affects its accuracy. A more common approach is to save one subset for testing, called the holdout set~\citep{genAdap}, and to reuse it multiple times. This may, however, lead to overfitting to the holdout set itself. This can be observed, for example, in the machine learning competitions Kaggle~\citep{genAdap} or ImageNet~\citep{statValidity}. In these competitions, the participants are given a sequence of samples and are required to provide a model with good prediction capabilities. The model is then tested on a hidden test set by the organizers, and a score is returned to the participants, who can then submit a new model. This procedure is repeated until the end of the competition, wherein the best models are evaluated on yet a different test set: the contrast between the scores obtained at the end and the ones from the continuously reused test set shows significant evidence of overfitting.

The difference between the performance of the model on the training data versus a fresh sample of data is called the generalization error in the machine learning literature. Standard approaches~\citep{stability} to bound this error rely on the notion of \emph{stability} of learning algorithms. Roughly speaking, an algorithm generalizes well if small changes in its input lead to only small changes in its output (i.e., it is stable). A more recent line of work~\citep{statValidity,genAdap,algoStability}, initiated by Dwork \emph{et al.}~\citep{statValidity}, takes adaptivity explicitly into account. These works mainly rely on the notion of \emph{differential privacy} (which originated in the database security literature~\citep{DworkCalibrating}). Besides inducing a suitable notion of stability, differentially private algorithms behave well under composition. That is, a sequence of differentially private algorithms induces a differentially private algorithm, allowing us to make generalization guarantees even in the adaptive setting. However, as it comes from the privacy literature, differential privacy is quite restrictive, which led to the introduction of more relaxed notions (that still behave well under composition) such as $\beta$-max information, $I_\infty^{\beta}(X;Y)$ ~\citep{genAdap,maxInfo}. 




In this paper, we view the learning algorithms as conditional distributions, and 
provide generalization guarantees using the notion of \emph{maximal leakage}. Maximal leakage is a secrecy metric that has appeared both in the computer security literature~\citep{CompSecQuantLeakage,LeakageGeneralized,CompSecAddMultMaxLeakage}, and the information theory literature~\citep{leakage}. It quantifies the leakage of information from a random variable $X$ to another random variable $Y$, and is denoted by $\ml{X}{Y}$. The basic insight is as follows: if a learning algorithm leaks little information about the training data, then it will generalize well. Moreover, similarly to differential privacy, maximal leakage behaves well under composition: we can bound the leakage of a sequence of algorithms if each of them has bounded leakage. It is also robust against post-processing. In addition, it is much less restrictive than differential privacy (e.g., maximal leakage is always bounded for finite random variables, whereas differential privacy can be infinite). As compared to $I_\infty^{\beta}(X;Y)$, which is rarely given in closed form, $\ml{X}{Y}$ is easier to compute and analyze. Indeed, it is simply given by the following formula (for finite $X$ and $Y$): 
\begin{equation} \label{eq:ml1}
\ml{X}{Y}= \log \sum_y \max_{x: P(x)>0} P_{Y|X}(y|x).
\end{equation}


\subsection{Overview of Results}
Let $X$ and $Y$ be two random variables distributed over $\X$ and $\Y$, respectively, and let $E \subseteq \X \times \Y$ be an event. Our first result bounds the probability of $E$ under the joint distribution of $(X,Y)$ in terms of the marginal distribution of $X$, the fibers $E_y$ (i.e., $E_y = \{x: (x,y) \in E\}$), and the leakage from $X$ to $Y$:
\begin{align} \label{firstResult}
\mathbb{P}(E) \leq \max_{y\in\mathcal{Y}}\mathbb{P}_X(E_y))\cdot\exp(\ml{X}{Y}).
\end{align}
This type of bound allows us to connect the probability of an event, measured when the dependence is considered, with the probability of the same event but under the assumption of statistical independence (with the same marginals). Whenever we have independence, we have $\ml{X}{Y}=0$. Hence $\exp(\ml{X}{Y})=1$, and we retrieve the bound that does not account for the dependence (examples of these bounds, exploited later on, are McDiarmid's inequality and the significance level used in hypothesis testing to control the false discovery probability). That is, our bounds can \emph{recover} the classical bounds of the non-adaptive setting.
Otherwise, when independence is not satisfied, our bound introduces a multiplicative term that grows as the dependence of $Y$ on $X$ grows. The formal statement and the proof can be found in Section \ref{sec:leakageAdapt}, Theorem \ref{adaptML}.


With the suitable choice of what $X$, $Y$, and $E$ represent, we derive useful bounds in the contexts of both the generalization error and post-selection hypothesis testing. To wit, our main application is the following bound on the generalization error for $0$-$1$ loss functions.
\begin{mytheorem*}[Theorem \ref{generrML} below] Let $\X$ be the sample space and $\mathcal{H}$ be the set of hypotheses.
	Let $\mathcal{A}:\mathcal{X}^n\to \mathcal{H}$ be a learning algorithm that, given a sequence $S$ of $n$ points, returns a hypothesis $h\in \mathcal{H}$. Suppose $S$ is sampled i.i.d according to some distribution $\mathcal{P}$, i.e., $S\sim \mathcal{P}^n$. 
	Let $E=\{(S,h):|L_{\mathcal{P}}(h)-L_S(h)|>\eta \}$.  Then,
	\begin{align}\label{eq:generrIntro} \mathbb{P}(E) \leq 2\cdot\exp(\ml{S}{\mathcal{A}(S)} -2n\eta^2).\end{align}
\end{mytheorem*}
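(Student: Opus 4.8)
The plan is to instantiate the general bound of Theorem~\ref{adaptML} (equation~\eqref{firstResult}) with $X = S$ and $Y = \A(S)$, so that the abstract event there becomes precisely the generalization event $E = \{(S,h) : |L_{\mathcal{P}}(h) - L_S(h)| > \eta\}$. Under this identification the fiber over a value $h \in \mathcal{H}$ is
\begin{equation}
E_h = \{S : |L_{\mathcal{P}}(h) - L_S(h)| > \eta\},
\end{equation}
the set of samples on which the now-\emph{fixed} hypothesis $h$ fails to generalize, and Theorem~\ref{adaptML} immediately gives
\begin{equation}
\Prob(E) \le \max_{h \in \mathcal{H}} \Prob_S(E_h) \cdot \exp\!\left(\ml{S}{\A(S)}\right).
\end{equation}
This already isolates the entire adaptive contribution into the multiplicative factor $\exp(\ml{S}{\A(S)})$, reducing the claim to a purely non-adaptive concentration statement about each individual fiber.

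Next I would bound $\Prob_S(E_h)$ for an arbitrary but fixed $h$. Since the marginal law of $S$ is the i.i.d.\ distribution $\mathcal{P}^n$ and $h$ is held fixed, the empirical loss $L_S(h) = \tfrac{1}{n}\sum_{i=1}^n \ell(h,x_i)$ is an average of $n$ independent random variables, each valued in $[0,1]$ by virtue of the $0$-$1$ loss, with common mean $L_{\mathcal{P}}(h)$. Applying McDiarmid's inequality (equivalently Hoeffding's), with bounded differences $c_i = 1/n$, yields the two-sided tail bound $\Prob_S(E_h) \le 2\exp(-2n\eta^2)$, uniformly in $h$. Substituting this uniform bound for $\max_{h} \Prob_S(E_h)$ gives $\Prob(E) \le 2\exp(-2n\eta^2)\cdot\exp(\ml{S}{\A(S)}) = 2\exp(\ml{S}{\A(S)} - 2n\eta^2)$, which is the claim.

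The step I expect to require the most care is the first one: recognizing that passing to the fibers $E_h$ is exactly what legitimizes invoking the classical, non-adaptive concentration inequality. Within a fiber the hypothesis is frozen, so the i.i.d.\ product structure of $S$ is untouched and McDiarmid applies verbatim; the fact that in reality $h = \A(S)$ depends on the very sample it is evaluated on — the source of the overfitting phenomenon — is absorbed wholesale into the leakage term by Theorem~\ref{adaptML}. Making this separation explicit, and verifying that the $0$-$1$ loss indeed supplies the $[0,1]$ boundedness that produces the constant $2$ and the exponent $2n\eta^2$, is the conceptual core of the argument; the remaining manipulation is routine.
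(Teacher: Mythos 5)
Your proposal is correct and matches the paper's own proof: both instantiate Theorem~\ref{adaptML} with $X=S$, $Y=\A(S)$, bound each fiber $E_h$ uniformly by the two-sided McDiarmid/Hoeffding bound $2\exp(-2n\eta^2)$ (sensitivity $c=1/n$, using $\E[L_S(h)]=L_{\mathcal{P}}(h)$ for fixed $h$), and multiply by $\exp(\ml{S}{\A(S)})$. Your added commentary on why freezing $h$ inside a fiber legitimizes the non-adaptive concentration step is a faithful elaboration of the same argument, not a different route.
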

Theorem~\ref{generrML} offers a bound on the generalization error that can be exponentially decreasing with $n$ since the leakage $\ml{S}{\mathcal{A}(S)}$ cannot grow more than linearly in $n$.  For instance, if $\A$ is $\epsilon$-DP with $\epsilon<2\cdot \eta^2$, we show that the above bound is exponentially decreasing. However, the bound applies more generally even if $\A$ does not satisfy $\epsilon$-differential privacy for any $\epsilon$. More details about this comparison can be found in Section~\ref{sec:leakageAndDP}.


Another connection between generalization error and information-theoretic measures can be found in~\citep{learningMI} where, under the same assumption, it is shown that $\mathbb{P}(E) \leq O\bigg(\frac{I(S;\A(S))}{n\eta^2}\bigg)$. Whenever our bound is exponentially decreasing in $n$, the improvement in the sample complexity is exponential in the confidence of the bound. This comparison is further discussed in Section~\ref{sec:leakageAndMutI}.

Our analysis can be applied for settings more general than the $0$-$1$ loss functions:
\begin{mytheorem*}[Theorem~\ref{generrDPML} below]
	Let $\mathcal{A}:\mathcal{X}^n\to \mathcal{Y}$ be an algorithm. Let $\mathbf{X}$ be a random variable distributed over $\mathcal{X}^n$ and let $Y=\A(\mathbf{X})$. Given $\beta \in [0,1]$, for every $y\in\mathcal{Y}$ let $R(y)\subseteq \mathcal{X}^n$ satisfy $\Prob(\{\mathbf{X}\in R(y)\}) \leq \beta$. Then, 
	\begin{equation}\mathbb{P}(\{\mathbf{X} \in R(Y)\}) \leq \exp(\ml{\mathbf{X}}{Y})\cdot \beta.\end{equation}
\end{mytheorem*}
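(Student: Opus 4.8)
The plan is to recognize this statement as a direct corollary of Theorem~\ref{adaptML} (equation~\eqref{firstResult}), obtained by a suitable choice of the event $E$ and by identifying the abstract random variables $X,Y$ of that theorem with $\mathbf{X}$ and $Y=\A(\mathbf{X})$. The entire content of the proof is to package the given family of sets $\{R(y)\}_{y\in\Y}$ into a single event whose fibers are exactly the $R(y)$, so that the uniform bound $\Prob(\{\mathbf{X}\in R(y)\})\le\beta$ translates into a bound on the maximum over $y$ of the fiber probabilities.

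Concretely, I would first define the event
\begin{equation}
E=\{(\mathbf{x},y)\in\X^n\times\Y : \mathbf{x}\in R(y)\}.
\end{equation}
The next step is to compute its fibers: for each fixed $y$, $E_y=\{\mathbf{x}:(\mathbf{x},y)\in E\}=R(y)$ by construction. I would then observe that, because $Y=\A(\mathbf{X})$ is a deterministic function of $\mathbf{X}$, the joint probability of $E$ coincides with the quantity of interest, namely $\Prob(E)=\Prob(\{(\mathbf{X},Y)\in E\})=\Prob(\{\mathbf{X}\in R(Y)\})$.

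With these identifications in place, the hypothesis $\Prob(\{\mathbf{X}\in R(y)\})\le\beta$ for every $y$ is precisely the statement that $\Prob_{\mathbf{X}}(E_y)\le\beta$ for all $y$, whence $\max_{y\in\Y}\Prob_{\mathbf{X}}(E_y)\le\beta$. Applying Theorem~\ref{adaptML} to the pair $(\mathbf{X},Y)$ and the event $E$ then yields
\begin{equation}
\Prob(\{\mathbf{X}\in R(Y)\})=\Prob(E)\le\max_{y\in\Y}\Prob_{\mathbf{X}}(E_y)\cdot\exp(\ml{\mathbf{X}}{Y})\le\beta\cdot\exp(\ml{\mathbf{X}}{Y}),
\end{equation}
which is the desired inequality.

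Since this is essentially a repackaging of the master bound, I do not anticipate a genuine obstacle; the only point requiring care is verifying that the fibers of the chosen event $E$ are exactly the prescribed sets $R(y)$ and that $\Prob(E)$ genuinely captures $\Prob(\{\mathbf{X}\in R(Y)\})$ under the substitution $Y=\A(\mathbf{X})$. Everything else is an immediate consequence of Theorem~\ref{adaptML}.
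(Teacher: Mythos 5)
Your proposal is correct and is exactly the argument the paper intends: the paper states that Theorem~\ref{generrDPML} ``is immediate once we have Theorem \ref{adaptML},'' and your construction of $E=\{(\mathbf{x},y):\mathbf{x}\in R(y)\}$ with fibers $E_y=R(y)$ is precisely that instantiation. One small remark: your appeal to $Y=\A(\mathbf{X})$ being \emph{deterministic} is unnecessary (and the paper allows randomized $\A$) --- the identity $\Prob(\{(\mathbf{X},Y)\in E\})=\Prob(\{\mathbf{X}\in R(Y)\})$ holds for any joint distribution of $(\mathbf{X},Y)$ directly from the definition of $E$.
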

The idea behind Theorem~\ref{generrDPML} is the following: suppose we have a collection of events $R(y) \subseteq \X^n$ and that each of these events has a small probability of occurring. Then, even when $Y=\A(\mathbf{X})$, the probability that $\mathbf{X}\in R(Y)$ remains small as long as $\ml{\mathbf{X}}{Y}$ is small. This implies that, in the spirit of \citep{statValidity}, given an algorithm $\A$ with bounded maximal leakage, adaptive analyses involving $\A$ can be thought as almost non-adaptive, up to a correction factor equal to $\exp(\Le(\mathbf{X}\to Y))$. 

In \citep[Theorem 11]{statValidity}, under the assumption of $\A$ being $\epsilon$-DP for some range of values of $\epsilon$, it is shown that $\mathbb{P}(\{\mathbf{X} \in R(Y)\}) \leq 3\cdot \sqrt\beta.$ We thus provide a more general result, as the class of algorithms with bounded leakage is not restricted to the differentially private ones. Furthermore, by exploiting a connection between differential and maximal leakage, we show that our bound is tighter if $\epsilon \leq \frac{\log(3/\sqrt{\beta})}{n} \leq \sqrt{\frac{\log(1/\beta)}{2n}}$. Moreover, an immediate consequence of the theorem is that $\mathbb{P}(\{\mathbf{X} \in R(Y)\}) \leq |\Y| \cdot \beta$, since $\ml{X}{Y} \leq \log |\Y|$, thus recovering  the same statement of \citep[Theorem 9]{genAdap}.
The details of the comparisons can be found in Sections~\ref{sec:leakageAndDP} and~\ref{sec:leakageAndMI}.

This rewriting of our results allows us to apply them even in the context of adaptive hypothesis testing.
\begin{mytheorem*}[Theorem \ref{hypTestML} below]
	Let $\mathcal{A} : \mathcal{X}^n \to \mathcal{T}$ be an algorithm
	for selecting a test statistic $t\in\mathcal{T}$. Let $\mathbf{X}$ be a random dataset over $\mathcal{X}^n$. Suppose that $\sigma\in [0, 1]$ is the significance level chosen to control the false discovery probability for the test statistic $t$. Denote
	with $E$ the event that $\mathcal{A}$ selects a statistic such that the null
	hypothesis is true but its p-value is at most $\sigma$ (event that we make a false discovery using $\sigma$ as significance value). Then,
	\begin{equation}\mathbb{P}(E) \leq \exp(\Le(\mathbf{X} \to \mathcal{A}(\mathbf{X}))) \cdot \sigma.\end{equation}
\end{mytheorem*}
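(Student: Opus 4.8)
The plan is to obtain this as a direct specialization of Theorem~\ref{generrDPML}, by casting the false-discovery event in the framework of that theorem with $\mathcal{Y} = \mathcal{T}$, $Y = \mathcal{A}(\mathbf{X})$, and $\beta = \sigma$. The entire argument reduces to choosing the right family of fibers $R(t)$ and checking that each has probability at most $\sigma$ under the marginal of $\mathbf{X}$.

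First I would fix the fibers. For each test statistic $t \in \mathcal{T}$, write $H_0^t$ for the associated null hypothesis and $p_t(x)$ for the p-value computed from a dataset $x$, and set
\begin{equation}
R(t) = \begin{cases} \{x \in \mathcal{X}^n : p_t(x) \leq \sigma\} & \text{if } H_0^t \text{ holds,} \\ \emptyset & \text{otherwise.} \end{cases}
\end{equation}
With this definition, the event $\{\mathbf{X} \in R(\mathcal{A}(\mathbf{X}))\}$ coincides exactly with $E$: the dataset lands in its own fiber precisely when the null hypothesis for the selected statistic $\mathcal{A}(\mathbf{X})$ is true and the corresponding p-value is at most $\sigma$, which is the definition of a false discovery at level $\sigma$.

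Next I would verify the per-fiber bound $\Prob(\{\mathbf{X} \in R(t)\}) \leq \sigma$ for every fixed $t$. If $H_0^t$ is false the fiber is empty and the bound is trivial. If $H_0^t$ is true, the bound is exactly the super-uniformity property of a valid p-value: under the null hypothesis, $\Prob(p_t(\mathbf{X}) \leq \sigma) \leq \sigma$ for all $\sigma \in [0,1]$. This is the one and only step where the hypothesis-testing content enters. With the per-fiber bound in hand, applying Theorem~\ref{generrDPML} with $\beta = \sigma$ immediately yields
\begin{equation}
\Prob(E) = \Prob(\{\mathbf{X} \in R(Y)\}) \leq \exp(\Le(\mathbf{X} \to \mathcal{A}(\mathbf{X}))) \cdot \sigma.
\end{equation}

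I expect the main obstacle to be definitional rather than analytical: one must encode the truth of the null hypothesis into the fibers $R(t)$ so that they simultaneously reconstruct the false-discovery event $E$ and satisfy the uniform probability bound. Once this bookkeeping is correct, the theorem is an immediate corollary of the preceding result, and no further estimation is needed.
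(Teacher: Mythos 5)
Your proposal is correct and matches the paper's own route: the paper presents Theorem~\ref{hypTestML} as an immediate corollary of Theorem~\ref{adaptML}, of which Theorem~\ref{generrDPML} is just the rephrasing with fibers $R(y)$ that you invoke. Your choice of $R(t)$ (empty when the null is false, the sub-level set of the p-value when it is true) together with the super-uniformity bound $\Prob(p_t(\mathbf{X})\leq\sigma)\leq\sigma$ under a true null is exactly the intended specialization.
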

Similar results based on $I_\infty^\beta(X;Y)$ are derived in~\citep{maxInfo}. However,  $I_\infty^\beta(X;Y)$ requires the knowledge of the prior to be correctly computed. By contrast, to compute $\ml{X}{Y}$ we only need to know the support of $X$, and partial information about the conditional $P_{Y|X}$.



Other than providing generalization guarantees, one important characteristic of maximal leakage is that it is robust under post-processing and composes adaptively.
\begin{mylemma*}[Robustness to post-processing, Lemma \ref{robustnessLeakage} below]
    Let $\mathcal{A}:\mathcal{X}\to \mathcal{Y}$ and $\mathcal{B}:\mathcal{Y}\to\mathcal{Y'}$ be two algorithms. Then,
	$\ml{X}{\B(\A(X))} \leq \ml{X}{\A(X)}$.
\end{mylemma*}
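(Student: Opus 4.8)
The plan is to work directly with the closed-form expression for maximal leakage in \eqref{eq:ml1} and to exploit the Markov structure $X \to \A(X) \to \B(\A(X))$ that is induced by the fact that $\B$ acts on the output of $\A$ alone. Writing $Y = \A(X)$ and $Z = \B(Y)$, the composition makes $Z$ conditionally independent of $X$ given $Y$, so that
\begin{equation}
P_{Z|X}(z|x) = \sum_{y} P_{Z|Y}(z|y)\,P_{Y|X}(y|x).
\end{equation}
The goal is then reduced to showing $\sum_z \max_{x} P_{Z|X}(z|x) \le \sum_y \max_{x} P_{Y|X}(y|x)$ and taking logarithms of both sides.

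First I would substitute the identity above into the inner maximization and push the maximum through the nonnegative weights $P_{Z|Y}(z|y)$:
\begin{equation}
\max_{x: P(x)>0} P_{Z|X}(z|x) = \max_{x: P(x)>0} \sum_y P_{Z|Y}(z|y)\, P_{Y|X}(y|x) \le \sum_y P_{Z|Y}(z|y)\, \max_{x: P(x)>0} P_{Y|X}(y|x),
\end{equation}
where the inequality is the elementary fact that the maximum of a nonnegatively-weighted sum is at most the weighted sum of the individual maxima. Summing over $z$ and interchanging the order of summation then gives
\begin{equation}
\sum_z \max_{x} P_{Z|X}(z|x) \le \sum_y \left( \max_x P_{Y|X}(y|x)\right) \sum_z P_{Z|Y}(z|y) = \sum_y \max_x P_{Y|X}(y|x),
\end{equation}
the last equality using $\sum_z P_{Z|Y}(z|y) = 1$ for every $y$. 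Applying $\log(\cdot)$ yields $\ml{X}{\B(\A(X))} \le \ml{X}{\A(X)}$.

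The individual steps are routine once \eqref{eq:ml1} is in hand, so the part that requires the most care is justifying the conditional-independence identity itself, namely that post-processing genuinely produces the Markov chain $X - Y - Z$ even when $\B$ is randomized. I would also check that the maximizations over $\{x : P(x) > 0\}$ are handled consistently on both sides: a $y$ that is unreachable from the support of $X$ contributes zero to each side and may be discarded. For finite alphabets, which is the regime in which \eqref{eq:ml1} is stated, the interchange of summation and maximization is unproblematic; extending beyond finite alphabets would require the essential-supremum form of maximal leakage and a corresponding measure-theoretic justification of that interchange, but this is not needed here.
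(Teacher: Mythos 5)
Your proof is correct, but it takes a more self-contained route than the paper does. The paper proves this lemma in one line: it observes that $\B$ acting only on $\A(X)$ induces the Markov chain $X - \A(X) - \B(\A(X))$, and then directly applies the data-processing property of Lemma \ref{leakageProps} (item 1, imported from \citep{leakage}), which states $\ml{X}{Z} \leq \min\{\ml{X}{Y}, \ml{Y}{Z}\}$ whenever $X - Y - Z$ holds. You instead re-derive that data-processing inequality from scratch out of the closed form \eqref{eq:ml1}: factor $P_{Z|X}$ through $P_{Z|Y}$ via the Markov chain, push the maximum inside the nonnegatively-weighted sum, swap the $z$ and $y$ summations, and use that $P_{Z|Y}(\cdot|y)$ normalizes to one. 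Each step is sound for finite alphabets, your handling of the support constraints is consistent, and your flagged caveat (that randomized $\B$ still yields the Markov chain, provided its internal randomness is independent of $X$ given $Y$) is exactly the right thing to check. What the paper's route buys is brevity and a slightly stronger conclusion for free, namely the additional bound $\ml{X}{\B(\A(X))} \leq \ml{\A(X)}{\B(\A(X))}$; what your route buys is transparency --- it makes explicit where finiteness, the Markov structure, and the normalization of the post-processing kernel enter, and it is in essence a proof of the cited property itself rather than an appeal to it.
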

The implication is the following: no matter which post-processing you apply to the outcome of your algorithm, the generalization guarantees provided by having a bounded leakage on $\A(X)$ will extend to $\B(\A(X))$ as well, making it a robust measure.
\begin{mylemma*}[Adaptive composition, Lemma \ref{thCompositionN} below]
	Consider a sequence of $n\geq 1$ algorithms: $(\mathcal{A}_1,\ldots,\mathcal{A}_n)$ such that for every $k\leq n$ the outputs of $\A_1,\ldots,\A_{k-1}$ are inputs to $\A_k$. Then, denoting by $A_1,\ldots, A_n$ the (random) outputs of the algorithm:
	\begin{equation} \Le(X\to (A_1,\ldots,A_n)) = \Le(X\to \textbf{A}^n) \leq \sum_{i=1}^n k_i. \end{equation}
\end{mylemma*}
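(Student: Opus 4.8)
The plan is to work directly from the closed-form expression for maximal leakage given in Equation~\eqref{eq:ml1}. Writing $\mathbf{A}^n = (A_1,\ldots,A_n)$ and $A^{i-1} = (A_1,\ldots,A_{i-1})$ (with realizations $a^{i-1}$), I would expand
\begin{equation*}
\exp\left(\ml{X}{\mathbf{A}^n}\right) = \sum_{a_1,\ldots,a_n} \max_{x:\,P_X(x)>0} P_{\mathbf{A}^n|X}(a_1,\ldots,a_n \mid x),
\end{equation*}
and apply the chain rule for conditional probabilities to factor
\begin{equation*}
P_{\mathbf{A}^n|X}(a_1,\ldots,a_n \mid x) = \prod_{i=1}^n P_{A_i \mid X, A^{i-1}}(a_i \mid x, a^{i-1}),
\end{equation*}
which is exactly where the adaptive structure enters: each $A_i$ is produced by $\A_i$ from $X$ and the previously released outputs $a^{i-1}$.

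The first key step is to replace the maximum of a product by the product of maxima, $\max_x \prod_i f_i(x) \leq \prod_i \max_x f_i(x)$, giving
\begin{equation*}
\exp\left(\ml{X}{\mathbf{A}^n}\right) \leq \sum_{a_1,\ldots,a_n} \prod_{i=1}^n \max_{x:\,P_X(x)>0} P_{A_i \mid X, A^{i-1}}(a_i \mid x, a^{i-1}).
\end{equation*}
The second key step is to evaluate this sum by peeling off one coordinate at a time, starting from $a_n$. Since only the $i=n$ factor depends on $a_n$, the innermost sum is $\sum_{a_n} \max_{x} P_{A_n \mid X, A^{n-1}}(a_n \mid x, a^{n-1})$, which is precisely $\exp$ of the conditional maximal leakage of $\A_n$ given the realized history $a^{n-1}$; by hypothesis this is at most $\exp(k_n)$ uniformly in $a^{n-1}$. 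Pulling this factor out and iterating down to $a_1$ collapses the whole expression to $\prod_{i=1}^n \exp(k_i) = \exp\left(\sum_{i=1}^n k_i\right)$, and taking logarithms yields the claim.

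The main obstacle---really the only delicate point---is the interpretation and uniform control of the per-stage bounds $k_i$. Here $k_i$ must be read as a bound on the conditional maximal leakage of $\A_i$, i.e.\ on $\ml{X}{A_i}$ given $A^{i-1}=a^{i-1}$, that holds for \emph{every} realization $a^{i-1}$ of the earlier outputs. It is this uniformity that makes the peeling argument go through: after the product-of-maxima step each inner sum depends on the earlier coordinates only through the conditioning event $A^{i-1}=a^{i-1}$, so bounding it by $\exp(k_i)$ regardless of $a^{i-1}$ lets the factor be extracted from the outer sum cleanly. One should also be careful that the inner maxima are taken over the fixed support $\{x:P_X(x)>0\}$ throughout, which only enlarges each max and hence preserves the upper bound. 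An equivalent route is to first prove the two-algorithm case and then induct on $n$, the inductive step being a single application of this peeling bound; I would favor the direct computation since it avoids stating a separate conditional composition result.
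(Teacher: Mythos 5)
Your proof is correct, but it follows a genuinely different route from the paper's. The paper proceeds modularly: it first establishes the chain-rule inequality $\Le(X\to(A_1,\ldots,A_n))\leq \sum_{i=1}^n \Le(X\to A_i\mid A_1,\ldots,A_{i-1})$ (Lemma \ref{compositionN}), obtained by telescoping the two-variable inequality \eqref{ineqLeak} imported from \citep{leakageLong}, and then bounds each conditional term by $k_i$ via the argument of Lemma \ref{thComposition2}: using the closed form \eqref{conditionalLeakage}, it identifies $P_{A_i\mid X, A^{i-1}}$ with the kernel of $\A_i$ at a fixed history and enlarges the inner max from the conditional support $\{x: P_{X\mid A^{i-1}}(x\mid a^{i-1})>0\}$ to the full support of $P_X$. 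You never form conditional maximal leakage at all: your single computation from \eqref{eq:ml1} --- kernel factorization of $P_{\mathbf{A}^n\mid X}$, the bound $\max_x\prod_i f_i(x)\leq\prod_i\max_x f_i(x)$, then peeling the sums from $a_n$ down to $a_1$ --- re-derives both ingredients inline, and it hinges on exactly the two facts the paper exploits: leakage depends on $P_X$ only through its support (so each per-factor max is over the fixed set $\{x:P_X(x)>0\}$, which makes the peeled inner sum literally equal to $\exp\big(\Le(X\to\A_i(X,a^{i-1}))\big)$, the quantity bounded by hypothesis), and the stage bounds hold uniformly over histories (so each factor extracts cleanly from the outer sum). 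One terminological nit: that inner sum is $\exp$ of the \emph{unconditional} leakage of the fixed-history algorithm $\A_i(\cdot,a^{i-1})$, not of the paper's conditional leakage $\Le(X\to A_i\mid A^{i-1})$, whose closed form additionally maxes over the conditional support and over $a^{i-1}$; your argument uses the former, which is what the hypothesis supplies, so nothing breaks. As for what each approach buys: yours is self-contained and elementary, requiring neither the operational definition of conditional leakage nor the inequality \eqref{ineqLeak} (which the paper cites rather than proves), while the paper's is modular, with Lemma \ref{compositionN} valid for arbitrary random variables rather than only algorithmic compositions, and with the conditional bound of Lemma \ref{thComposition2} reusable on its own; the inductive alternative you sketch at the end is, in essence, the paper's decomposition.
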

From this we can derive that if each of the $\A_i$ has a bounded leakage (and thus, generalizes), even the adaptive composition of the whole sequence will have bounded leakage (although, with a worse bound) and potentially maintain the generalization guarantees and avoid overfitting. This property is fundamental for practical applications.



\subsection{More Related Work}
In addition to differentially private algorithms, Dwork \emph{et al.}~\citep{genAdap} show that algorithms whose output can be described concisely generalize well. They further introduce $\beta$-max information to unify the analysis of both classes of algorithms. Consequently, one can provide generalization guarantees for a sequence of algorithms that alternate between differential privacy and short description. In~\citep{maxInfo}, the authors connect $\beta$-max information with the notion of approximate differential privacy, but show that there are no generalization guarantees for an arbitrary composition of algorithms that are approximate-DP and algorithms with short description length.

With a more information-theoretic approach, bounds on the exploration bias and/or the generalization error are given in~\citep{explBiasMI,infoThGenAn,jiao2017dependence,explBiasLeak}, using mutual information and other dependence-measures. 
\subsection{Outline}
The work is organized in the following way. In Section \ref{probSett} we will discuss the problem setting. We will properly define Adaptive Data Analysis in Section \ref{ada} and Maximal Leakage in Section \ref{ml}, along with the presentation of most of its properties (including robustness to post-processing and adaptive composition). 
In Section \ref{main} we will present our main results: the generalization guarantees implied by a bound on the maximal leakage (Section \ref{sec:leakageAdapt}), its application to post-selection hypothesis testing (Section \ref{sec:psht}), we will also compare the measure to Differential Privacy, Mutual Information and Max-Information (Sections \ref{sec:leakageAndDP}, \ref{sec:leakageAndMutI}, \ref{sec:leakageAndMI}).
For the proofs the reader is referred to the Appendices.
\section{Problem setting}\label{probSett}
\subsection{Adaptive Data Analysis}\label{ada}
Before entering into the details of our results, we define the model of adaptive composition we will be considering throughout the exposition and already used in~\citep{statValidity,genAdap,maxInfo}. 
\begin{mydef}
	Let $\X$ be a set. Let $S$ be a random variable over $\X^n$.
	Let $(\mathcal{A}_1,\ldots,\mathcal{A}_m)$ be a sequence of algorithms such that $\forall i: 1\leq i \leq m\quad  \mathcal{A}_i  : \mathcal{X}^n \times \mathcal{Y}_1 \times \ldots \times  \mathcal{Y}_{i-1} \to \mathcal{Y}_i$.
	Denote with $Y_1 =\A_1(S), Y_2=\A_2(S,Y_1), \ldots, Y_m = \A_m(S,Y_1,\ldots,Y_{m-1})$.
	The adaptive composition of $(\mathcal{A}_1,\ldots,\mathcal{A}_m)$ is an algorithm that takes as an input $S$ and sequentially executes the algorithms $(\A_1,\ldots,\A_m)$ as described by the sequence $(Y_i, 1\leq 1 \leq m)$
\end{mydef}ß
This level of generality allows us to formalize the behaviour of a data analysts who, after viewing the previous outcomes of the analysis performed, decides what to do next. A potential analyst would execute a sequence of algorithms that are known to have a certain property (e.g. generalize well) when used without adaptivity. The question we would like to address is the following: is this property also maintained by the adaptive composition of the sequence?
The answer is not trivial as, for every $i$, the outcome of $\mathcal{A}_i$ depends both on $S$ and on the previous outputs,  that depend on the data themselves. However, when this property is guaranteed by some measure that composes adaptively itself (like differential privacy or, as we will show soon, maximal leakage) then it can be preserved.

For the remainder of this paper, we will only consider finite sets, and $\log$ is taken to the base $e$.

\subsection{Maximal Leakage}\label{ml}
\label{sec:leakage}
In this section, we review some basic properties of maximal leakage. As mentioned earlier, the main properties of (approximate) differential privacy and $\beta$-max information that make them useful for adaptive data analysis are: 1) robustness to post-processing, and 2) adaptive composition. 
We show that maximal leakage also satisfies these properties, with the advantage of being less restrictive than differential privacy, and easier to analyze than $I_\infty^{\beta}(X;Y)$. 
$\ml{X}{Y}$ was introduced as a  way of measuring the leakage from $X$ to $Y$, hence the following definition:

\begin{mydef}[Def. 1 of \citep{leakage}]\label{leakage}
	Given a joint distribution $P_{XY}$ on finite alphabets $\mathcal{X}$ and $\mathcal{Y}$, the maximal leakage from $X$ to $Y$ is defined as:
	\begin{equation}  \mathcal{L}(X\to Y)= \sup_{\substack{U-X-Y-\hat{U}}} \log \frac{\mathbb{P}(\{U=\hat{U}\})}{\max_{u\in\mathcal{U}} \mathbb{P}_U(\{u\})},\end{equation}
	where $U$ and $\hat{U}$ take values in the same finite, but arbitrary, alphabet. 
\end{mydef} \noindent
It is shown in~\citep[Theorem 1]{leakage} that: 
\begin{align}\mathcal{L}(X\to Y) = \log \sum_{y\in \mathcal{Y}} \max_{\substack{x\in\mathcal{X}}: P_X(x)>0} P_{Y|X}(y|x). \end{align}
Some important properties of the maximal leakage are the following: 

\begin{mylemma}[\citep{leakage}]
	\label{leakageProps}
	
	For any joint distribution $P_{XY}$ on finite alphabets $\X$ and $\Y$, \begin{enumerate}
		\item $\ml{X}{Z} \leq \min\{\ml{X}{Y},\ml{Y}{Z}\}$, if the Markov chain $X - Y - Z$ holds.
		\item $\ml{X}{Y} \geq 0$, with equality if and only if $X$ and $Y$ are
		independent.
		\item $\ml{X}{Y} \leq \min\{\log |X |, \log |Y|\}$.
		\item $\ml{X}{Y} \geq I(X; Y )$.
	\end{enumerate}
\end{mylemma}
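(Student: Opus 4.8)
The plan is to prove all four parts directly from the closed-form expression $\ml{X}{Y} = \log \sum_{y} \max_{x: P_X(x)>0} P_{Y|X}(y|x)$, which is the most computationally convenient handle; I will not need to return to the operational definition via the chain $U-X-Y-\hat U$.

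For parts 2 and 3 the arguments are short and rely only on elementary bounds on the inner maximum. For non-negativity (part 2), I observe that for each $y$ the maximum over $x$ dominates the $P_X$-weighted average, i.e. $\max_{x:P_X(x)>0} P_{Y|X}(y|x) \geq \sum_x P_X(x) P_{Y|X}(y|x) = P_Y(y)$; summing over $y$ gives $\sum_y \max_x P_{Y|X}(y|x) \geq 1$, hence $\ml{X}{Y}\geq 0$. Equality forces the maximum to equal the average for every $y$, which happens exactly when $P_{Y|X}(y|x)$ is constant across the support of $X$, i.e. when $X$ and $Y$ are independent. For the upper bound (part 3), the $\log|Y|$ side follows from $\max_x P_{Y|X}(y|x)\leq 1$ summed over the $|\Y|$ values of $y$; the $\log|X|$ side follows from replacing the maximum over $x$ by the sum over the support of $X$ and swapping the order of summation, since $\sum_y P_{Y|X}(y|x)=1$ for each $x$.

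For the data-processing inequality (part 1) I would use the Markov chain to write $P_{Z|X}(z|x)=\sum_y P_{Z|Y}(z|y)P_{Y|X}(y|x)$ and then split into the two bounds. To get $\ml{X}{Z}\leq \ml{X}{Y}$, I pull the maximum over $x$ inside the sum over $y$, obtaining $\max_x P_{Z|X}(z|x)\leq \sum_y P_{Z|Y}(z|y)\max_x P_{Y|X}(y|x)$; summing over $z$ and using $\sum_z P_{Z|Y}(z|y)=1$ collapses the expression to $\sum_y \max_x P_{Y|X}(y|x)$. To get $\ml{X}{Z}\leq \ml{Y}{Z}$, I instead bound the convex combination $P_{Z|X}(z|x)=\sum_y P_{Y|X}(y|x)P_{Z|Y}(z|y)$ by its largest term $\max_y P_{Z|Y}(z|y)$. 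The main obstacle here is bookkeeping the supports: I need the maximum on the right to range only over $y$ with $P_Y(y)>0$ so that it matches the definition of $\ml{Y}{Z}$, which is justified because $P_X(x)>0$ and $P_{Y|X}(y|x)>0$ together force $P_Y(y)\geq P_X(x)P_{Y|X}(y|x)>0$.

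Finally, for part 4 I would start from $I(X;Y)=\sum_{x,y}P_{XY}(x,y)\log\frac{P_{Y|X}(y|x)}{P_Y(y)}$ and apply two steps. First, bound $\log P_{Y|X}(y|x)\leq \log \max_{x'}P_{Y|X}(y|x')$ inside the sum over $x$; after carrying out the sum over $x$ this yields $I(X;Y)\leq \sum_y P_Y(y)\log\frac{q_y}{P_Y(y)}$ with $q_y:=\max_x P_{Y|X}(y|x)$. Second, apply Jensen's inequality to the concave logarithm to move the sum inside: $\sum_y P_Y(y)\log\frac{q_y}{P_Y(y)}\leq \log\sum_y q_y=\ml{X}{Y}$. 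This Jensen step is the one genuinely clever move in the lemma; everything else is routine once the closed form is in hand.
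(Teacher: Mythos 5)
Your proof is correct: all four parts follow cleanly from the closed-form expression, the support bookkeeping in the data-processing step (using $P_Y(y)\geq P_X(x)P_{Y|X}(y|x)>0$) is handled exactly where it matters, and the Jensen step in part 4 is the standard argument. Note that the paper itself states this lemma without proof, citing \citep{leakage}; your derivations coincide with the standard proofs given in that reference, so there is no methodological divergence to report.
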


\noindent The following is a direct application of the first property of the lemma:
\begin{mylemma}[Robustness to post-processing]\label{robustnessLeakage}
	Let $\mathcal{X}$ be the sample space and let $X$ be distributed over $\X$. Let $\Y$ and $\mathcal{Y'}$ be output spaces, and consider $\mathcal{A}:\mathcal{X}\to \mathcal{Y}$ and $\mathcal{B}:\mathcal{Y}\to\mathcal{Y'}$. Then,
	$\ml{X}{\B(\A(X))} \leq \ml{X}{\A(X)}$.
\end{mylemma}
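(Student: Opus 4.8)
The plan is to recognize this as an immediate consequence of the data-processing-type inequality recorded as the first property in Lemma~\ref{leakageProps}. First I would set $Y = \A(X)$ and $Z = \B(\A(X)) = \B(Y)$, so that the quantity to be bounded, $\ml{X}{\B(\A(X))}$, is exactly $\ml{X}{Z}$, while the target upper bound $\ml{X}{\A(X)}$ is exactly $\ml{X}{Y}$. The whole lemma then reduces to establishing the single hypothesis needed to invoke that property, namely the Markov chain $X - Y - Z$.

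The crux is therefore to verify that $X - Y - Z$ holds. Viewing the algorithms as conditional distributions, $\A$ induces the channel $P_{Y|X}$ and $\B$ induces $P_{Z|Y}$. Since $\B$ receives only $Y = \A(X)$ as its input---and never sees $X$ directly---its output distribution given $Y$ does not depend on $X$, and any internal randomness of $\B$ is drawn independently of $X$. Formally this gives $P_{Z|X,Y} = P_{Z|Y}$, which is precisely the statement that $Z$ is conditionally independent of $X$ given $Y$, i.e.\ that $X - Y - Z$ is a Markov chain.

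With the Markov chain in hand, I would apply the first property of Lemma~\ref{leakageProps} to obtain
\begin{equation}
\ml{X}{Z} \leq \min\{\ml{X}{Y},\, \ml{Y}{Z}\} \leq \ml{X}{Y},
\end{equation}
and substituting back $Y = \A(X)$ and $Z = \B(\A(X))$ yields $\ml{X}{\B(\A(X))} \leq \ml{X}{\A(X)}$, as claimed.

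The only genuinely substantive step is the verification of the Markov property; once that is secured the conclusion is a one-line application of an already-established inequality. The subtle point to guard against is the possibility that $\B$ might covertly depend on $X$---for instance through randomness shared with $\A$---which would break the conditional independence. Under the stated signature $\B:\mathcal{Y}\to\mathcal{Y'}$, however, the post-processing acts solely on the value $\A(X)$, so no such dependence can arise, and the Markov chain holds without further assumptions.
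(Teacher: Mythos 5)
Your proposal is correct and matches the paper's own argument exactly: the paper also derives Lemma~\ref{robustnessLeakage} as a direct application of the first (data-processing) property of Lemma~\ref{leakageProps}, with the Markov chain $X - \A(X) - \B(\A(X))$ holding because $\B$ acts only on the output of $\A$. Your explicit verification of the Markov condition is a sound elaboration of a step the paper leaves implicit, but it is the same route.
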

The useful implication of this result is as follows: any generalization guarantees provided by $\A$ cannot be invalidated by further processing the output of $\A$. 
In order to analyze the behavior of the adaptive composition of algorithms in terms  of maximal leakage, we first need the following definition of conditional maximal leakage
\begin{mydef}[Conditional Maximal Leakage \citep{leakageLong}]
	Given a joint distribution $P_{XYZ}$ on alphabets $\mathcal{X}, \mathcal{Y}, \text{ and } \mathcal{Z}$, define:
	\begin{equation} \Le (X \!\! \to \!\! Y |Z) =\sup_{\substack{U:U-X-Y|Z}} \log\frac{\mathbb{P}(\{U = \hat{U}(Y, Z)\})}{\mathbb{P}(\{U=\tilde{U}(Z)\})},\end{equation}
	where $U$ takes value in an arbitrary finite alphabet and we consider $\hat{U}, \tilde{U}$ to be the optimal estimators of $U$ given $(Y,Z)$ and $Z$, respectively.
\end{mydef}
\noindent Again, it is shown in~\citep{leakageLong} that:
\begin{align} \Le (X \!\! \to \!\! Y |Z) & = \log \bigg( \max_{\substack{z:P_Z(z)>0}} \sum_{y} \max_{\substack{x: P_{X|Z}(x|z)}>0} P_{Y|XZ}(y|xz)\bigg),  \label{conditionalLeakage} \\
\text{and } \ml{X}{(Y,Z)} & \leq \ml{X}{Y} + \Le (X \!\! \to \!\! Z |Y). \label{ineqLeak}
\end{align}
A useful consequence of this inequality is the following result. 
\begin{mylemma}[Adaptive Composition of Maximal Leakage]\label{thComposition2}
	Let $\mathcal{A}:\mathcal{X}\to \mathcal{Y}$ be an algorithm such that $\Le(X\to \mathcal{A}(X)) \leq k_1$. 
	Let $\mathcal{B}:\mathcal{X}\times \mathcal{Y}\to \mathcal{Z}$ be an algorithm such that for all $ y \in \mathcal{Y},~ \Le(X\to\mathcal{B(}X,y))\leq k_2$. 
	Then $\Le \Big(X\to \big(\mathcal{A(}X),\mathcal{B}(X,\mathcal{A}(X))\big)\Big) \leq k_1+k_2$.
\end{mylemma}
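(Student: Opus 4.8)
The plan is to invoke the chain-rule inequality \eqref{ineqLeak} and then control the resulting conditional term by means of the per-value hypothesis on $\B$. Write $A = \A(X)$ and $B = \B(X,A)$ for brevity. Applying \eqref{ineqLeak} with $Y = A$ and $Z = B$ gives
\begin{equation*}
\ml{X}{(A,B)} \leq \ml{X}{A} + \Le(X \to B \,|\, A).
\end{equation*}
The first summand is at most $k_1$ by assumption, so the whole task reduces to proving $\Le(X \to B \,|\, A) \leq k_2$.

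For the conditional term I would expand it using its closed form \eqref{conditionalLeakage}. The decisive observation is that, once we condition on $A = a$, the output $B = \B(X,a)$ is exactly the result of running the fixed algorithm $\B(\cdot,a)$ on $X$; hence $P_{B|XA}(b\,|\,x,a)$ coincides with the conditional law of $\B(X,a)$ given $X = x$. Fixing any $a$ with $P_A(a) > 0$ and noting the support inclusion $\{x : P_{X|A}(x|a) > 0\} \subseteq \{x : P_X(x) > 0\}$, enlarging the inner maximization domain can only increase each term, so
\begin{align*}
\sum_b \max_{x : P_{X|A}(x|a) > 0} P_{B|XA}(b\,|\,x,a)
&\leq \sum_b \max_{x : P_X(x) > 0} P_{\B(X,a)|X}(b\,|\,x) \\
&= \exp\!\big(\ml{X}{\B(X,a)}\big) \leq \exp(k_2),
\end{align*}
where the equality is formula \eqref{eq:ml1} for maximal leakage and the last step is the hypothesis $\ml{X}{\B(X,a)} \leq k_2$. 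Since this holds for every admissible $a$, taking the maximum over $a$ and then the logarithm in \eqref{conditionalLeakage} yields $\Le(X \to B \,|\, A) \leq k_2$, and combining the two steps gives $\ml{X}{(A,B)} \leq k_1 + k_2$, as claimed.

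The step I expect to require the most care is the treatment of the support conditions: one must verify that the inner maximum in \eqref{conditionalLeakage}, which ranges only over the conditional support $\{x : P_{X|A}(x|a) > 0\}$, is dominated by the maximum over the full support $\{x : P_X(x) > 0\}$ appearing in the definition of $\ml{X}{\B(X,a)}$. This monotonicity under enlarging the feasible set is elementary but essential, as it is precisely what licenses passing from the conditional quantity to the per-value hypothesis. A secondary, essentially notational, point is confirming that $P_{B|XA}(\cdot\,|\,x,a)$ genuinely coincides with the law of $\B(X,a)$ given $X = x$ — i.e.\ that the internal randomness of $\B$ depends only on its declared inputs and not further on $A$ — which is exactly the semantics of the adaptive-composition model of Section~\ref{ada}.
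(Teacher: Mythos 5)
Your proof is correct and takes essentially the same route as the paper's: both apply the chain-rule inequality \eqref{ineqLeak} with $Y=\A(X)$, $Z=\B(X,\A(X))$, and then bound the conditional term via the closed form \eqref{conditionalLeakage} by identifying $P_{B|XA}(\cdot\,|\,x,a)$ with the law of $\B(X,a)$ given $X=x$ and enlarging the inner maximum from the conditional support $\{x: P_{X|A}(x|a)>0\}$ to the full support of $X$. The support-monotonicity point you single out as delicate is precisely the step the paper emphasizes as crucial, namely that maximal leakage depends on $P_X$ only through its support.
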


\noindent The proof of this lemma relies crucially on the fact that maximal leakage depends on the marginal $P_X$ only through its support. 
In order to generalize the result to the adaptive composition of $n$ algorithms, we need to lift the property stated in the inequality \eqref{ineqLeak} to more than $2$ outputs.
\begin{mylemma}\label{compositionN}
	Let $n\geq1$ and $X,A_1,\ldots,A_n$ be random variables.
	\begin{equation} \Le(X\to (A_1,\ldots,A_n)) \leq \Le(X\to A_1) + \Le(X\to A_2|A_1)+ \ldots + \Le(X\to A_n| (A_1,\ldots,A_{n-1})).\end{equation}
\end{mylemma}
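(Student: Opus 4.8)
The plan is to prove the inequality by induction on $n$, using the two-output bound \eqref{ineqLeak} as the engine and grouping the first several outputs into a single vector-valued random variable at each step. For the base case, when $n=1$ the statement reads $\Le(X\to A_1)\leq \Le(X\to A_1)$, which holds with equality; equivalently one may take $n=2$ as the base case, where the claim is exactly \eqref{ineqLeak} with $Y=A_1$ and $Z=A_2$.

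For the inductive step, suppose the inequality holds for every collection of $n-1$ outputs. Writing $\mathbf{A}^{n-1}=(A_1,\ldots,A_{n-1})$ and regarding it as a single random variable taking values in the finite product alphabet $\mathcal{Y}_1\times\cdots\times\mathcal{Y}_{n-1}$, I would apply \eqref{ineqLeak} with $Y=\mathbf{A}^{n-1}$ and $Z=A_n$ to obtain
\begin{equation*}
\Le(X\to (A_1,\ldots,A_n)) \leq \Le(X\to \mathbf{A}^{n-1}) + \Le(X\to A_n \mid \mathbf{A}^{n-1}).
\end{equation*}
The second term on the right is already the last summand of the desired bound. Applying the induction hypothesis to $\Le(X\to \mathbf{A}^{n-1})$ expands the first term into $\sum_{i=1}^{n-1}\Le(X\to A_i \mid (A_1,\ldots,A_{i-1}))$, and combining the two gives precisely the claimed sum of $n$ terms.

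The only point that requires care, and the main (modest) obstacle, is verifying that \eqref{ineqLeak} is legitimately applicable when $Y$ is itself a tuple. Since \eqref{ineqLeak} is stated for arbitrary random variables on finite alphabets, and the tuple $(A_1,\ldots,A_{n-1})$ is just a random variable on the product alphabet, the application is valid; one should simply check that the conditional leakage $\Le(X\to A_n \mid \mathbf{A}^{n-1})$ produced by \eqref{ineqLeak} coincides, term for term, with the intended last summand $\Le(X\to A_n \mid (A_1,\ldots,A_{n-1}))$. This holds by the definition \eqref{conditionalLeakage} of conditional maximal leakage with conditioning variable $Z=\mathbf{A}^{n-1}$, since the closed-form expression depends on the conditioning random variable only through its realizations, and these are unchanged by viewing $(A_1,\ldots,A_{n-1})$ as a vector. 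With this identification in place, the induction closes and no further computation is needed.
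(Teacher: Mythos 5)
Your proof is correct and follows essentially the same route as the paper: the paper's argument is exactly your induction unrolled into a telescoping chain of inequalities, peeling off the last output at each step via \eqref{ineqLeak} with $Y=\mathbf{A}^{k-1}$ and $Z=A_k$. Your extra remark---that \eqref{ineqLeak} applies verbatim when $Y$ is a tuple on a finite product alphabet and that the resulting conditional term matches \eqref{conditionalLeakage}---is a valid observation that the paper leaves implicit.
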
 \noindent 
An immediate application of Lemma \ref{compositionN} 
leads us to the following result.
\begin{mylemma}\label{thCompositionN}
	Consider a sequence of $n\geq 1$ algorithms: $(\mathcal{A}_1,\ldots,\mathcal{A}_n)$ where for each  $ 1\leq i \leq n$, $\mathcal{A}_i : \mathcal{X}\times \mathcal{Y}_1\times\ldots \times \mathcal{Y}_{i-1} \to \mathcal{Y}_i$. Suppose that for all $ 1\leq i \leq n$ and for all  $(y_1,\ldots,y_{n-1})\in \mathcal{Y}_1\times\ldots \times \mathcal{Y}_{i-1}$ , $\Le(X\to \mathcal{A}_i(X,y_1,\ldots,y_{i-1})) \leq k_i$. Then, denoting by $A_1,\ldots, A_n$ the (random) outputs of the algorithm:
	\begin{equation} \Le(X\to (A_1,\ldots,A_n)) = \Le(X\to \textbf{A}^n) \leq \sum_{i=1}^n k_i. \end{equation}
\end{mylemma}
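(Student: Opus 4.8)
The plan is to combine the chain rule of Lemma \ref{compositionN} with a term-by-term bound, exploiting that conditioning on a fixed past $z = (a_1,\ldots,a_{i-1})$ collapses the conditional leakage of $A_i$ into the ordinary leakage of the ``frozen'' algorithm $\mathcal{A}_i(X,z)$, which is controlled by $k_i$ by hypothesis.

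First I would invoke Lemma \ref{compositionN} to obtain
\[
\Le(X \to \mathbf{A}^n) \leq \Le(X \to A_1) + \sum_{i=2}^n \Le\big(X \to A_i \mid (A_1,\ldots,A_{i-1})\big).
\]
The $i=1$ term is handled immediately: since $A_1 = \mathcal{A}_1(X)$, the hypothesis gives $\Le(X \to A_1) \leq k_1$. It then suffices to show $\Le(X \to A_i \mid (A_1,\ldots,A_{i-1})) \leq k_i$ for each $i \geq 2$.

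The heart of the argument is this per-step bound. Writing $Z = (A_1,\ldots,A_{i-1})$ and using the closed form \eqref{conditionalLeakage},
\[
\Le(X \to A_i \mid Z) = \log\!\left( \max_{z : P_Z(z)>0} \ \sum_{a_i} \ \max_{x : P_{X|Z}(x|z)>0} P_{A_i|XZ}(a_i \mid x, z) \right).
\]
Because $A_i = \mathcal{A}_i(X, A_1,\ldots,A_{i-1})$, once $Z = z = (a_1,\ldots,a_{i-1})$ is fixed, the law $P_{A_i|XZ}(\cdot \mid x, z)$ is precisely the output distribution of the frozen algorithm $\mathcal{A}_i(X,z)$ conditioned on $X = x$. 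Now comes the support observation highlighted after Lemma \ref{thComposition2}: the inner maximum ranges over the conditional support $\{x : P_{X|Z}(x|z)>0\}$, which is contained in the full support $\{x : P_X(x)>0\}$, so enlarging the index set can only increase the maximum. Hence, for every reachable $z$,
\[
\sum_{a_i} \max_{x : P_{X|Z}(x|z)>0} P_{A_i|XZ}(a_i \mid x, z) \ \leq \ \sum_{a_i} \max_{x : P_X(x)>0} P_{\mathcal{A}_i(X,z)|X}(a_i \mid x) \ = \ \exp\big(\Le(X \to \mathcal{A}_i(X,z))\big) \ \leq \ e^{k_i},
\]
where the last step applies the hypothesis to the fixed tuple $z = (a_1,\ldots,a_{i-1})$. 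Taking the maximum over $z$ with $P_Z(z)>0$ and the logarithm yields $\Le(X \to A_i \mid Z) \leq k_i$.

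Summing over $i$ gives $\Le(X \to \mathbf{A}^n) \leq \sum_{i=1}^n k_i$, as claimed. I expect the only delicate point to be the support bookkeeping in the central display: one must confirm that the conditional support is genuinely a subset of the marginal support — so that the frozen-algorithm leakage (whose own maximum is over the marginal support) upper-bounds the conditional expression — and that the outer maximum over $z$ only needs to consider realizations with $P_Z(z) > 0$, precisely those for which the hypothesis on $\mathcal{A}_i$ is available.
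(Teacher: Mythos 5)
Your proof is correct and takes essentially the same route as the paper: the paper obtains Lemma \ref{thCompositionN} as an immediate application of the chain rule in Lemma \ref{compositionN}, with each conditional term $\Le(X \to A_i \mid (A_1,\ldots,A_{i-1}))$ bounded by exactly your support-enlargement argument, which is the proof of Lemma \ref{thComposition2} generalized from a single fixed output $y$ to a fixed tuple $z=(a_1,\ldots,a_{i-1})$. Your ``delicate point'' --- that $\{x : P_{X|Z}(x|z)>0\} \subseteq \{x : P_X(x)>0\}$ so the frozen-algorithm leakage dominates the conditional expression --- is precisely the step the paper flags when it remarks that the argument relies crucially on maximal leakage depending on $P_X$ only through its support.
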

\noindent The proofs can be found in Appendix \ref{app:leakageProps}.
\section{Main Results}\label{main}

Inspired by the results based on differential privacy~\citep{learningDp} and mutual information~\citep{learningMI}, we derive new bounds using maximal leakage. 
The underlying intuition is the following: if the outcome of a learning algorithm leaks only little information from the training data, then it will generalize well. In addition to adaptive composition and robustness against post-processing, maximal leakage has the following advantages:
\begin{itemize}
	\item it can be computed using only a high level description of the conditional distributions $P_{Y|X=x}$ and depends on $P_X$ only through the support;
	\item it allows us to obtain an exponentially decreasing bound in the number of samples of the training set.
\end{itemize}

\subsection{Low Leakage implies low generalization error}
\label{sec:leakageAdapt}

Our main result allows us to connect the probability of some event happening under the assumption of statistical independence between the input and output random variables $X$ and $Y$ and the probability of the same event, taking into account the dependence of the output from the input.
This connection relies on the R\'enyi divergence of order infinity between the distributions of $X$ and $X|Y$, the first one representing the statistical independence scenario and the second one considering the dependence between the two. (In~\citep{learningMI}, a similar connection is considered, using KL divergence instead.)
\begin{mytheorem}\label{adaptML}
	Let $\mathcal{P}$ be a distribution on the space $\mathcal{X}\times\mathcal{Y}$ and denote with $\mathcal{P}_X$ the marginal distribution of the random variable $X$. Let $E$ be an event and for every $y \in \mathcal{Y}$, denote with $E_y$ the set $\{x:(x,y)\in E\}$. Then,
	\begin{equation} \mathcal{P}(E) \leq \exp({\Le(X\to Y)})\cdot \max_{y \in \Y} \mathcal{P}_X(E_y) .\end{equation}
\end{mytheorem}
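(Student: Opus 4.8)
The plan is to expand $\mathcal{P}(E)$ as an iterated sum over the fibers, factor the joint law into marginal times conditional, and then make the leakage term appear by bounding each conditional probability by its maximum over the support of $P_X$ --- precisely the quantity that shows up in the closed-form expression for $\ml{X}{Y}$ given in the excerpt. So the whole argument is essentially bookkeeping: organize the sum correctly and the factor $\exp(\ml{X}{Y})$ emerges on its own.

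Concretely, first I would write $\mathcal{P}(E) = \sum_{y} \sum_{x \in E_y} P_X(x)\, P_{Y|X}(y|x)$, grouping the points of $E$ by their second coordinate. The key step is then, for each fixed $y$, to replace $P_{Y|X}(y|x)$ by $\max_{x':\, P_X(x')>0} P_{Y|X}(y|x')$. Since this upper bound is independent of $x$, it factors out of the inner sum, and the leftover $\sum_{x \in E_y} P_X(x)$ collapses to exactly $\mathcal{P}_X(E_y)$. This yields $\mathcal{P}(E) \leq \sum_{y} \big(\max_{x'} P_{Y|X}(y|x')\big)\, \mathcal{P}_X(E_y)$. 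To finish, I would bound each fiber probability by the uniform quantity $\max_{y'' \in \Y} \mathcal{P}_X(E_{y''})$ and pull it out of the outer sum; what remains is $\sum_{y} \max_{x'} P_{Y|X}(y|x') = \exp(\ml{X}{Y})$ by the formula quoted after Definition~\ref{leakage}, giving the claim.

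The Rényi-divergence viewpoint flagged just before the statement can be presented as an equivalent route: writing $\mathcal{P}(E) = \sum_y P_Y(y)\, \mathcal{P}_{X|Y}(E_y \mid y)$ and using the pointwise inequality $P_{X|Y}(x|y) \leq P_X(x)\exp\!\big(D_\infty(P_{X|Y=y}\,\|\,P_X)\big)$ produces the same bound, because a one-line Bayes-rule computation shows $\frac{P_{X|Y}(x|y)}{P_X(x)} = \frac{P_{Y|X}(y|x)}{P_Y(y)}$ and hence $\sum_y P_Y(y)\exp\!\big(D_\infty(P_{X|Y=y}\,\|\,P_X)\big) = \sum_y \max_{x'} P_{Y|X}(y|x') = \exp(\ml{X}{Y})$. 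I would likely include this remark to connect the theorem to the motivating discussion, even though the direct computation above is self-contained.

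The \emph{only} real subtlety --- this is not a hard proof --- is getting the order of the two maximizations right. One must bound the conditional $P_{Y|X}(y|x)$ by its max over $x$ \emph{before} summing over the fiber, so that $\mathcal{P}_X(E_y)$ emerges cleanly, and bound $\mathcal{P}_X(E_y)$ by its max over $y$ only afterward. Taking a premature joint maximum over $(x,y)$, or swapping the two steps, would either destroy the fiber-probability structure or yield a strictly looser constant, so the argument, while short, does require this specific sequencing.
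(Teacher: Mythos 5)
Your proposal is correct and is essentially the paper's own argument: the paper phrases the key step as $\mathcal{P}_{X|Y=y}(E_y) \le \exp\bigl(D_{\infty}(\mathcal{P}_{X|Y=y}\,\|\,\mathcal{P}_X)\bigr)\cdot \mathcal{P}_X(E_y)$ followed by an expectation over $Y$, and unpacking that divergence (as the paper does in its final chain of equalities) yields exactly your direct computation $\mathcal{P}(E) \le \sum_{y}\bigl(\max_{x':P_X(x')>0} P_{Y|X}(y|x')\bigr)\mathcal{P}_X(E_y) \le \max_{y\in\Y}\mathcal{P}_X(E_y)\cdot\exp(\ml{X}{Y})$, with the same sequencing of the two maximizations that you correctly flag as the only subtlety. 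The R\'enyi-divergence route you offer as an equivalent remark is in fact verbatim the proof given in Appendix~\ref{app:proof}, so the two presentations differ only in whether the divergence language is made explicit.
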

The proof is in Appendix \ref{app:proof}.
An immediate application can be found in statistical learning theory and relies on McDiarmid's inequality. More precisely, the object of study is the generalization error of learning algorithms. To wit: 
\begin{mydef}
	Let $\mathcal{X},\mathcal{Y}$ be two sets, respectively the domain set and the label set and let $\mathcal{Z}=\mathcal{X}\times\mathcal{Y}$. Let $\mathcal{H}$ be the hypothesis class, i.e. a set of prediction rules $h:\mathcal{X}\to\mathcal{Y}$. Given $n\in\mathbb{N}$, a learning algorithm $\mathcal{A}$ is a map $\mathcal{A}:\mathcal{Z}^n\to \mathcal{H}$ that, given as an input a finite sequence of domain points-label pairs $S=((x_1,y_1),\ldots,(x_n,y_n))$ , outputs some classifier $h=\mathcal{A}(S)\in\mathcal{H}$.
\end{mydef} \noindent
In order to estimate the capability of some learning algorithm of correctly classifying unseen instances of the domain set i.e. its \emph{generalization} capability, the concept of generalization error is introduced.
\begin{mydef} Let $\mathcal{P}$ be some distribution over $\mathcal{Z}=\mathcal{X}\times\mathcal{Y}$. The error (or risk) of a prediction rule $h:\mathcal{X}\to\mathcal{Y}$ with respect to $\mathcal{P}$ is defined as \begin{equation}L _\mathcal{P}(h)=\mathbb{E}_{(x,y)\sim \mathcal{P}}(\mathbbm{1} (h(x) \neq y))=\mathbb{P}_{(x,y)\sim \mathcal{P}}(h(x)\neq y),\end{equation}
	while, given a sample $S=((x_1,y_1),\ldots,(x_n,y_n))$, 
	the empirical error
	of $h$ with respect to $S$ is defined as \begin{equation}\label{empRisk}L_{S}(h) = \frac1n \sum_{i=1}^n \mathbbm{1}(h(x_i)\neq y_i).\end{equation}
	Moreover, given a learning algorithm $\mathcal{A}:\mathcal{Z}^n\to\mathcal{Y}$, its generalization error with respect to $S$ is defined as: \begin{equation}\label{generr}\text{gen-err}_\mathcal{P}(\mathcal{A},S)=|L_{\mathcal{P}}(\mathcal{A}(S))-L_{S}(\mathcal{A}(S))|.\end{equation}
\end{mydef}
\begin{mytheorem}\label{generrML}
	Let $\X \times \Y$ be the sample space and $\mathcal{H}$ be the set of hypotheses.
	Let $\mathcal{A}:\mathcal{X}^n \times \mathcal{Y}^n\to \mathcal{H}$ be a learning algorithm that, given a sequence $S$ of $n$ points, returns a hypothesis $h\in \mathcal{H}$. Suppose $S$ is sampled i.i.d according to some distribution $\mathcal{P}$ over $\X \times \Y$, i.e., $S\sim \mathcal{P}^n$. 
	Given $\eta \in (0,1)$, let $E=\{(S,h):|L_{\mathcal{P}}(h)-L_S(h)|>\eta \}$.  Then,
	\begin{align}\mathbb{P}(E) \leq 2\cdot\exp(\ml{S}{\mathcal{A}(S)} -2n\eta^2).\end{align}
\end{mytheorem}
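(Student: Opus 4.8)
The plan is to invoke Theorem~\ref{adaptML} with the identifications $X = S$ and $Y = \mathcal{A}(S)$, and then to control the resulting fiber probabilities by a classical concentration inequality in which the dependence on the data has been removed. Applying Theorem~\ref{adaptML} directly to the event $E$ yields
\begin{equation*}
\mathbb{P}(E) \leq \exp\!\big(\ml{S}{\mathcal{A}(S)}\big)\cdot \max_{h\in\mathcal{H}} \mathbb{P}_S(E_h),
\end{equation*}
where the fiber is $E_h = \{S : |L_{\mathcal{P}}(h) - L_S(h)| > \eta\}$. The whole problem thus reduces to bounding $\mathbb{P}_{S\sim\mathcal{P}^n}(E_h)$ uniformly over $h\in\mathcal{H}$.

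The key observation is that inside the fiber the hypothesis $h$ is \emph{fixed} and no longer a function of $S$: the adaptivity has been absorbed into the multiplicative leakage factor. Hence $L_S(h) = \frac1n\sum_{i=1}^n \mathbbm{1}(h(x_i)\neq y_i)$ is an average of $n$ i.i.d.\ random variables, each taking values in $[0,1]$, and its expectation over $S\sim\mathcal{P}^n$ is exactly $L_{\mathcal{P}}(h)$. I would then apply McDiarmid's inequality to the map $S\mapsto L_S(h)$: replacing a single pair $(x_i,y_i)$ changes $L_S(h)$ by at most $1/n$, so the bounded-differences constants are $c_i = 1/n$ and $\sum_i c_i^2 = 1/n$. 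The two-sided bound then gives
\begin{equation*}
\mathbb{P}_{S}(|L_{\mathcal{P}}(h) - L_S(h)| > \eta) \leq 2\exp(-2n\eta^2),
\end{equation*}
uniformly in $h$, so the maximum over $\mathcal{H}$ obeys the same estimate.

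Combining the two displays yields $\mathbb{P}(E) \leq 2\exp(\ml{S}{\mathcal{A}(S)} - 2n\eta^2)$, as claimed. The only conceptually delicate point---the main obstacle---is the first step: one must verify that $E$ has the form required by Theorem~\ref{adaptML}, with fibers $E_h$ that are genuinely events about $S$ alone once $h$ is frozen, so that the non-adaptive concentration estimate legitimately applies. Once the adaptivity is quarantined inside the $\exp(\ml{S}{\mathcal{A}(S)})$ factor, the remaining argument is exactly the standard non-adaptive bound, and the factor $2$ together with the exponent $-2n\eta^2$ come directly from McDiarmid's inequality with increments $1/n$.
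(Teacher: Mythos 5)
Your proof is correct and takes essentially the same route as the paper's own: apply Theorem~\ref{adaptML} with $X=S$, $Y=\mathcal{A}(S)$, then bound each fiber $E_h$ uniformly over $h\in\mathcal{H}$ by the two-sided McDiarmid inequality with sensitivity $c=1/n$, yielding the factor $2\exp(-2n\eta^2)$. Your explicit check that $h$ is frozen inside the fiber, so that $L_S(h)$ concentrates around its mean $L_{\mathcal{P}}(h)$ as in the non-adaptive setting, is precisely the justification the paper leaves implicit.
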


\begin{proof}
	Let us denote with $E_h$ the fiber of $E$ over $h$. By McDiarmid's inequality (Lemma \ref{mcdiarmids}), with $c=1/n$, we have that for every hypothesis $h\in\mathcal{H},\, \mathcal{P}_S(E_h) \leq 2\cdot \exp(-2n\eta^2)$, as the empirical error defined in equation \eqref{empRisk} has sensitivity of $1/n$. Then it follows from Theorem~\ref{adaptML} that:
	\begin{equation}\mathbb{P}(E) \leq \exp(\Le(S\to \mathcal{A}(S)))\cdot 2\exp(-2n\eta^2),\end{equation} 
	as desired.
\end{proof}
Whenever $\mathcal{A}$ is independent from the samples $S$ we have that $\exp(\mathcal{L}(S\to\mathcal{A}(S)))=1$ and we immediately fall back to the non-adaptive scenario. We are hence proposing a generalization of these bounds whenever adaptivity is introduced. A concrete example can be seen with Theorem \ref{generrML}: in this particular case, whenever $\mathcal{A}(S)$ is independent from $S$ we immediately retrieve that
$\mathbb{P}(E)\leq 2\cdot \exp(-2n\eta^2)$ i.e. McDiarmid's inequality with sensitivity $1/n$.\\
\subsection{Maximal Leakage and Differential Privacy}
\label{sec:leakageAndDP}

In the line of work started by Dwork et al. \citep{statValidity,genAdap}, the idea is to exploit the stability induced by differential privacy in order to derive generalization guarantees. The notion of algorithms that we will be considering is slightly more general than the notion of learning algorithms exposed before. 
Before proceeding to the comparison let us state a simple, 
but useful relationship between maximal leakage and pure differential privacy, proved in Appendix~\ref{app:leakageProps}.
\begin{mylemma}\label{MLandDP}
	Let $\mathcal{A}:\mathcal{X}^n\to\mathcal{Y}$ be an $\epsilon$-Differentially Private randomized algorithm, then $\Le(X\to\mathcal{A}(X))\leq \epsilon\cdot n$.
\end{mylemma}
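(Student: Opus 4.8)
The plan is to reduce the claim directly to the closed-form expression for maximal leakage, $\ml{X}{Y} = \log \sum_{y} \max_{x:P_X(x)>0} P_{Y|X}(y|x)$, and to control each inner maximum using the group-privacy property of differential privacy. First I would recall what $\epsilon$-DP says in the finite setting: for any two \emph{neighbouring} datasets $x,x'\in\mathcal{X}^n$ (those differing in exactly one coordinate) and every output $y$, one has $P_{Y|X}(y|x) \leq e^{\epsilon}\, P_{Y|X}(y|x')$, where $P_{Y|X}(y|x)$ denotes the probability that the randomized algorithm $\mathcal{A}$ outputs $y$ on input $x$.

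The key step is to lift this single-coordinate inequality to \emph{arbitrary} pairs of datasets. Given any $x,x'\in\mathcal{X}^n$, I would interpolate through a chain $x = z_0, z_1, \ldots, z_n = x'$, where $z_i$ coincides with $x'$ on the first $i$ coordinates and with $x$ on the remaining ones; consecutive elements of this chain are then neighbours by construction. Applying the $\epsilon$-DP inequality along the (at most) $n$ steps of the chain gives $P_{Y|X}(y|x) \leq e^{n\epsilon}\, P_{Y|X}(y|x')$ for every $y$. In particular, fixing any single reference dataset $x'$ in the support of $X$, we obtain $\max_{x:P_X(x)>0} P_{Y|X}(y|x) \leq e^{n\epsilon}\, P_{Y|X}(y|x')$ for each $y$.

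Substituting this into the closed form then finishes the argument: $\exp(\ml{X}{Y}) = \sum_{y} \max_{x:P_X(x)>0} P_{Y|X}(y|x) \leq e^{n\epsilon} \sum_{y} P_{Y|X}(y|x') = e^{n\epsilon}$, using that $P_{Y|X}(\cdot\,|x')$ is a probability distribution summing to one. Taking logarithms yields $\ml{X}{\mathcal{A}(X)} \leq n\epsilon$, as desired.

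I expect the only delicate point to be the interpolation/chaining step: one must check that consecutive $z_i$ genuinely differ in a single coordinate and that the multiplicative factors accumulate to exactly $e^{n\epsilon}$. This is precisely the standard group-privacy fact, and the exponent $n$ arises because the Hamming diameter of $\mathcal{X}^n$ is $n$. Everything else is a routine substitution into the maximal-leakage formula, and notably McDiarmid's inequality plays no role here.
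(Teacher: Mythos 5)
Your proof is correct and follows essentially the same route as the paper: both fix a reference dataset, lift the neighbouring-dataset inequality to arbitrary pairs via the Hamming-distance-at-most-$n$ (group privacy) argument, and substitute into the closed-form expression $\ml{X}{Y}=\log\sum_{y}\max_{x:P_X(x)>0}P_{Y|X}(y|x)$, using that $P_{Y|X}(\cdot\,|x')$ sums to one. Your explicit interpolation chain $z_0,\ldots,z_n$ merely spells out the step the paper describes as ``iteratively applying the definition of Differential Privacy.''
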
 \noindent
This suggests an immediate application of Theorem \ref{generrML}. Indeed, suppose $\mathcal{A}$ is an $\epsilon$-DP algorithm, then:
\begin{align} \label{eq:expbound}
\exp(\Le(X\to Y)-2n\eta^2) \leq \exp(\epsilon n -2n\eta^2) = \exp(-n(2\eta^2-\epsilon))
\end{align}
In order for the bound to be decreasing with $n$, we need $2\eta^2-\epsilon>0$ leading us to $\epsilon<2\cdot \eta^2$, where $\eta$ represents the accuracy of the generalization error and $\epsilon$ the privacy parameter. Thus, for fixed $\eta$, as long as the privacy parameter is smaller than $2\cdot \eta^2$, we have guaranteed generalization capabilities for $\mathcal{A}$ with an exponentially decreasing bound. For $\epsilon \leq \eta/2$, it is shown in~\citep[Theorem 9]{statValidity} that $\mathbb{P}(E) \leq \frac14 \exp{\left(-n\eta^2/12\right)}.$ It is easy to check that, for large enough $n$, our bound is tighter if $\epsilon \leq \frac{23}{12}\eta^2$.
A more general result shown in~\citep{statValidity} is the following:
\begin{mytheorem}[Thm. 11 of \citep{statValidity}]\label{generrDP}
	Let $\mathcal{A}:\mathcal{X}^n\to \mathcal{Y}$ be an $\epsilon-$differentially private algorithm. Let $X\sim\mathcal{P}^n$ be a random variable. Let $Y=\mathcal{A}(X)$ be the corresponding output random variable. Assume that for every $y\in\mathcal{Y}$ there is a subset $R(y)\subseteq \mathcal{X}^n$ such that $\max_{y\in\mathcal{Y}} \Prob(\{\mathbf{X}\in R(y)\}) \leq \beta$. Then, for $\epsilon \leq \sqrt{\frac{\ln(1/\beta)}{2n}}$ we have that \begin{equation}\Prob(\{\mathbf{X}\in R(Y)\}) \leq 3\sqrt{\beta}.\end{equation}
\end{mytheorem}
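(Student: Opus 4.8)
The plan is to route the proof through $\beta$-approximate max-information rather than through maximal leakage. This is deliberate: combining Lemma~\ref{MLandDP} with Theorem~\ref{generrDPML} only yields $\Prob(\{\mathbf{X}\in R(Y)\})\le e^{\epsilon n}\beta$, which is far weaker than $3\sqrt{\beta}$. The $\sqrt{\beta}$ improvement must come from exploiting that $\mathbf{X}\sim\mathcal{P}^n$ is a \emph{product} distribution together with the per-coordinate stability of $\epsilon$-DP, a structure that maximal leakage (which depends on $P_X$ only through its support) throws away.

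First I would reduce to a max-information estimate. Let $O=\{(x,y):x\in R(y)\}\subseteq\mathcal{X}^n\times\mathcal{Y}$ and let $(\mathbf{X}',Y')$ be independent with the same marginals as $(\mathbf{X},Y)$. The hypothesis gives $\Prob((\mathbf{X}',Y')\in O)=\E_{Y'}[\Prob_{\mathbf{X}'}(\mathbf{X}'\in R(Y'))]\le\beta$. So if I can show $I_\infty^{\beta'}(\mathbf{X};Y)\le k$ for a suitable $k$, the definition of approximate max-information immediately yields $\Prob(\{\mathbf{X}\in R(Y)\})=\Prob((\mathbf{X},Y)\in O)\le e^{k}\beta+\beta'$.

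The heart of the argument is to bound $I_\infty^{\beta'}(\mathbf{X};\mathcal{A}(\mathbf{X}))$ for $\epsilon$-DP on a product distribution. I would study the privacy-loss variable $L(x,y)=\log\frac{P_{Y|X}(y\mid x)}{P_Y(y)}$, using the standard fact that $\Prob(L(\mathbf{X},Y)>k)\le\beta'$ implies $I_\infty^{\beta'}(\mathbf{X};Y)\le k$. For each fixed $y$, $\epsilon$-DP forces $|\log P_{Y|X}(y\mid x)-\log P_{Y|X}(y\mid\tilde x)|\le\epsilon$ on neighbouring datasets, so $x\mapsto L(x,y)$ has bounded differences $\epsilon$ per coordinate; and the change of measure $dP_{XY}/d(P_XP_Y)=e^{L}$ gives $\E_{\mathbf{X}'\sim\mathcal{P}^n}[e^{L(\mathbf{X}',y)}]=1$, which pins the mean of $L(\cdot,y)$ into a band of width $O(n\epsilon^2)$ below zero. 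Writing $\Prob(L(\mathbf{X},Y)>k)=\E_{\mathbf{X}'\sim\mathcal{P}^n,\,Y'\sim P_Y}[e^{L(\mathbf{X}',Y')}\mathbf{1}\{L(\mathbf{X}',Y')>k\}]$ and applying a bounded-differences concentration inequality (in the spirit of Lemma~\ref{mcdiarmids}) to $\mathbf{X}'$ for each frozen $y$, I would control this tilted tail and obtain a threshold of the form $k=\Theta(\epsilon^2 n)+\epsilon\sqrt{n\ln(1/\beta')/2}$.

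The final step is bookkeeping: choose $\beta'=\sqrt{\beta}$ and invoke $\epsilon\le\sqrt{\ln(1/\beta)/(2n)}$, i.e. $\epsilon^2 n\le\tfrac12\ln(1/\beta)$, to verify $e^{k}\le 2/\sqrt{\beta}$, so that $e^{k}\beta+\beta'\le 2\sqrt{\beta}+\sqrt{\beta}=3\sqrt{\beta}$. I expect this last step to be the main obstacle: the assumed bound on $\epsilon$ is essentially tight, so one cannot afford a crude (e.g.\ Cauchy--Schwarz) estimate of the tilted tail $\E[e^{L}\mathbf{1}\{L>k\}]$. The constants in both the mean contribution ($\sim\epsilon^2 n$) and the fluctuation contribution ($\sim\epsilon\sqrt{n\ln(1/\beta')}$) must be tracked sharply enough that they recombine to exactly the factor $3$ under the stated threshold, rather than to a merely $O(\sqrt{\beta})$ bound.
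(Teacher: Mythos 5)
Your opening diagnosis is correct, and it is worth noting that the paper itself contains \emph{no} proof of Theorem~\ref{generrDP}: it is imported verbatim from \citep{statValidity} as a benchmark for Theorem~\ref{generrDPML}, and indeed the paper's own machinery (Lemma~\ref{MLandDP} with Theorem~\ref{generrDPML}) can only give $e^{\epsilon n}\beta$, since maximal leakage depends on $P_{\mathbf{X}}$ only through its support and cannot exploit the product structure of $\mathcal{P}^n$. Your route through the privacy-loss variable is the right replacement, and its ingredients are individually sound: $x\mapsto L(x,y)$ has bounded differences $\epsilon$ per coordinate under $\epsilon$-DP; the normalization $\E_{\mathbf{X}'\sim\mathcal{P}^n}[e^{L(\mathbf{X}',y)}]=1$ pins $\mu_y:=\E[L(\mathbf{X}',y)]$ into $[-n\epsilon^2/8,\,0]$ (Jensen on one side, the tensorized Hoeffding moment bound on the other); and Lemma~\ref{maxDivProb} converts a joint tail bound on $L$ into a bound on $I_\infty^{\beta'}$.

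The genuine gap is the final bookkeeping, exactly where you feared, and it is not a matter of tracked constants: the additive threshold $k=\epsilon^2n/2+\epsilon\sqrt{n\ln(1/\beta')/2}$ (the clean form, with the standard constant for the mean term as in \citep{genAdap}) is too lossy. Write $c=\ln(1/\beta)$; at the extreme $\epsilon=\sqrt{c/(2n)}$ with $\beta'=\sqrt\beta$ it gives $k\le c/4+c/(2\sqrt2)\approx 0.604\,c$, and the inequality $e^k\le 2/\sqrt\beta$, i.e.\ $k\le c/2+\ln 2$, fails once $c\gtrsim 6.7$ (i.e.\ $\beta\lesssim 1.3\cdot 10^{-3}$), yielding $e^k\beta\approx\beta^{0.396}\gg\sqrt\beta$ in precisely the regime of interest, where $\beta$ is exponentially small in $n$ (McDiarmid); optimizing over $\beta'$ only improves this to roughly $\beta^{0.42}$. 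The repair is to keep the Chernoff tilting explicit rather than decouple mean and fluctuation: for each frozen $y$ and any $\lambda\ge 1$, using $\mathbf{1}\{g_y>k\}\le e^{(\lambda-1)(g_y-k)}$ and $\mu_y\le 0$,
\begin{equation*}
\E_{\mathbf{X}'}\!\left[e^{g_y}\mathbf{1}\{g_y>k\}\right]\le e^{-(\lambda-1)k}\,\E\!\left[e^{\lambda g_y}\right]\le e^{-(\lambda-1)k+\lambda^2 n\epsilon^2/8},
\end{equation*}
so that $\lambda=4k/(n\epsilon^2)$ gives $\Prob(L(\mathbf{X},Y)>k)\le e^{k-2k^2/(n\epsilon^2)}$. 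Solving $2k^2/(n\epsilon^2)-k=\ln(1/\beta')$ \emph{exactly} with $\beta'=\sqrt\beta$ and $n\epsilon^2\le c/2$ gives the quadratic $4k^2/c-k-c/2=0$ in the worst case, whose root is $k=c/2$ (and $\lambda=4\ge 1$, so the tilting is valid); hence $e^k\beta+\beta'\le\sqrt\beta+\sqrt\beta=2\sqrt\beta\le 3\sqrt\beta$. So your architecture does close, but only with this quadratic solve replacing the additive max-information bound you quoted; alternatively one can skip the max-information conversion entirely and truncate fiber-wise, $\Prob(\mathbf{X}\in R(y)\mid Y=y)\le e^k\beta+\E[e^{g_y}\mathbf{1}\{g_y>k\}]$, averaging over $y$ with the same estimate, which is the direct style of argument used in \citep{statValidity}.
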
 \noindent
The theorem just stated shows that, if we have a collection of events $\{R(y)| R(y)\subseteq \mathcal{X}^n, y \in \mathcal{Y}\}$ and each of these has a small probability of happening, then even considering $R(Y)$, with $Y=\mathcal{A}(X)$, i.e. introducing adaptivity, the probability will remain small. 
This theorem, when applied together with McDiarmid's inequality allows us to characterize the generalization capabilities of DP algorithms, by simply using as $\beta$ the right-hand side of McDiarmid's inequality.
A rephrasing of Theorem~\ref{adaptML} allows us to compare our results with Theorem~\ref{generrDP}:
\begin{mytheorem}\label{generrDPML}
	Let $\mathcal{A}:\mathcal{X}^n\to \mathcal{Y}$ be an algorithm. Let $\mathbf{X}$ be a random variable distributed over $\mathcal{X}^n$ according to $\mathcal{P}^n$ and let $Y=\A(\mathbf{X})$. Assume that for every $y\in\mathcal{Y}$ there is a subset $R(y)\subseteq \mathcal{X}^n$ with the property that $\max_{y\in\mathcal{Y}} \Prob(\{\mathbf{X}\in R(y)\}) \leq \beta$. Then, \begin{equation} \mathbb{P}(\{\mathbf{X} \in R(Y)\}) \leq \exp({\Le(\mathbf{X}\to Y)})\cdot \beta.\end{equation}
\end{mytheorem}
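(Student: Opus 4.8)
The plan is to recognize this statement as a direct specialization of Theorem~\ref{adaptML}: essentially no new machinery is required, and the entire content lies in choosing the correct event $E$ and matching it against the fiber-decomposition already established there.

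First I would define the event $E \subseteq \X^n \times \Y$ by $E = \{(\mathbf{x}, y) : \mathbf{x} \in R(y)\}$. With this choice, the fiber of $E$ over a fixed output $y$ is precisely $E_y = \{\mathbf{x} : (\mathbf{x}, y) \in E\} = R(y)$. The key identification to verify is that, under the joint law of $(\mathbf{X}, Y)$ with $Y = \A(\mathbf{X})$, the event $E$ coincides exactly with the event $\{\mathbf{X} \in R(Y)\}$: indeed, a realized pair $(\mathbf{x}, y)$ lies in $E$ if and only if the realized input $\mathbf{x}$ falls into the fiber $R(y)$ associated with the realized output $y$. Hence $\mathbb{P}(E) = \mathbb{P}(\{\mathbf{X} \in R(Y)\})$, which is the quantity we wish to bound.

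Next I would apply Theorem~\ref{adaptML} with the random variable $X$ there taken to be $\mathbf{X}$ (distributed over $\X^n$ according to $\mathcal{P}^n$) and $Y = \A(\mathbf{X})$, so that the marginal $\mathcal{P}_X$ is $\mathcal{P}^n$. The theorem immediately gives $\mathbb{P}(E) \leq \exp(\Le(\mathbf{X} \to Y)) \cdot \max_{y \in \Y} \mathcal{P}_X(E_y)$. Substituting $E_y = R(y)$ and invoking the hypothesis yields $\max_{y \in \Y} \mathcal{P}_X(E_y) = \max_{y \in \Y} \Prob(\{\mathbf{X} \in R(y)\}) \leq \beta$, and combining the two displays produces $\mathbb{P}(\{\mathbf{X} \in R(Y)\}) \leq \exp(\Le(\mathbf{X} \to Y)) \cdot \beta$, as claimed.

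There is no genuine obstacle here, since the result is, as the surrounding text says, a rephrasing of Theorem~\ref{adaptML}. The only point deserving care is the conceptual one flagged above: confirming that the collection of fibers $\{R(y)\}_{y}$ is exactly the fiber-structure of a single well-defined event $E$, so that the bound on $\max_y \Prob(\{\mathbf{X} \in R(y)\})$ slots directly into the $\max_{y} \mathcal{P}_X(E_y)$ term of Theorem~\ref{adaptML}. Once this bookkeeping is in place, the inequality follows in one line.
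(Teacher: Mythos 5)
Your proposal is correct and takes essentially the same route as the paper, which states that the proof is immediate from Theorem~\ref{adaptML}: defining $E=\{(\mathbf{x},y):\mathbf{x}\in R(y)\}$ so that $E_y=R(y)$ and $\mathbb{P}(E)=\mathbb{P}(\{\mathbf{X}\in R(Y)\})$ is exactly the intended specialization. Your extra care in checking that the family $\{R(y)\}_y$ is the fiber structure of a single event $E$ is the only nontrivial bookkeeping, and you handle it correctly.
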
 \noindent
The proof is immediate once we have Theorem \ref{adaptML}.
The type of result we are providing here is qualitatively different from the ones derived with differential privacy. We do not pose any constraint on the algorithm itself but rather propose a way of estimating how the probabilities we are interested in change, by measuring the level of dependence we are introducing using maximal leakage. 

Now, suppose we have an $\epsilon$-differentially private algorithm with $\epsilon < \frac{\log{(3/\sqrt{\beta})}}{n}\leq \sqrt{\frac{\ln(1/\beta)}{2n}} $. Theorem~\ref{generrDP} provides a \emph{fixed} bound of $3\sqrt{\beta}$, while with our Theorems~\ref{generrDPML} and~\ref{MLandDP} we obtain that:
\begin{equation}
\exp({\Le(\mathbf{X}\to Y)})\cdot \beta < \exp{(\log{(3/\sqrt{\beta}))}}\cdot \ \beta = 3\sqrt{\beta}.
\end{equation}
Hence, whenever the privacy parameter is lower than $\frac{\log{(3/\sqrt{\beta})}}{n}$ we are able to provide a better bound. 
In the spirit of comparison with the differential privacy-derived results, let us also state Theorem \ref{generrML} with a general sensitivity $c$:
\begin{align} \label{eq:withsensitivity}
\mathbb{P}(E)\leq 2\cdot \exp\bigg(\Le(\mathbf{X}\to Y)- \frac{2\eta^2}{c^2n}\bigg)
\end{align}

Corollary 7 of \citep{genAdap} states that whenever an algorithm $\A:\X^n\to \Y$ outputs a function $f$ of sensitivity $c$ and is $\eta/(cn)-$DP then, denoting with $S$ a random variable distributed over $\X^n$ and with $E=\{(S,f): f(S)-\E(f)\geq \eta\}$ we have that $\mathbb{P}(E)\leq 3\exp(-\eta^2/(c^2n))$.
It is easy to see that Theorem~\eqref{eq:withsensitivity} provides a tighter bound whenever the accuracy $\eta> n\cdot c$.
As already stated before, the family of algorithms with bounded maximal leakage is not restricted to algorithms that are differentially private: a simple example can be found in algorithms with a bounded range, since $\Le(X\to Y)\leq \min\{\log|\X|,\log|\Y|\}$. This simple relation allows us to immediately retrieve another result stated in \citep[Theorem 9]{genAdap}: $\mathbb{P}(\{\mathbf{X} \in R(Y)\}) \leq |\Y|\cdot \beta$, showing how Theorem \ref{generrDPML} is more general than both Theorems 6 and 9 of~\citep{genAdap}.

\subsection{Post-selection Hypothesis Testing}\label{sec:psht}
As already underlined in \citep{maxInfo}, most of the state of the art results that rely on differential privacy provide interesting guarantees only when we are considering low-sensitivity functions. They cannot be applied, for instance, to the problem of adaptively performing hypothesis tests while providing statistical guarantees. The reason for this is that $p-$values have a sensitivity larger that $0.37/\sqrt{n}$ \citep[Lemma B.1]{maxInfo}. The theorem proven in \citep{algoStability} that relies on $(\epsilon,\delta)-$DP, when applied to $p-$values, provides trivial error guarantees. 
When applying Theorem \ref{generrML} to p-values, the result we get is the following:
\begin{equation}\mathbb{P}(E) \leq \exp (\Le(\mathbf{X} \to Y) -2\eta^2), \end{equation} 
suggesting that, in order to get a meaningful bound, according to this measure of leakage, it is necessary to leak very little information about the dataset. More precisely, we need $\Le(\mathbf{X} \to Y)\leq 2\eta^2$. Notice that in the same framework but without adaptivity, McDiarmid's inequality itself is not able to provide a better bound than $\exp(-2\eta^2)$.

Consider instead the problem of bounding the probability of making a false discovery, when the statistics to apply is selected with some data dependent algorithm. In this context, the guarantees that allow to upper-bound this probability by the significance value no longer hold. It is possible to show, using maximal leakage, how to adjust the significance level in these adaptive settings, in order to have a guaranteed bound on the probability of error.
As a Corollary of Theorem \ref{adaptML} we can retrieve the following:
\begin{mytheorem}\label{hypTestML}
	Let $\mathcal{A} : \mathcal{X}^n \to \mathcal{T}$ be a data dependent algorithm
	for selecting a test statistic $t\in\mathcal{T}$. Let $\mathbf{X}$ be a random dataset over $\mathcal{X}^n$. Suppose that $\sigma\in [0, 1]$ is the significance level chosen to control the false discovery probability for the test statistic $t$. Denote
	with $E$ the event that $\mathcal{A}$ selects a statistic such that the null
	hypothesis is true but its p-value is at most $\sigma$. Then,
	\begin{equation}\mathbb{P}(E) \leq \exp(\Le(\mathbf{X} \to \mathcal{A}(\mathbf{X}))) \cdot \sigma\end{equation}
\end{mytheorem}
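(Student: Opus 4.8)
The plan is to recognize this statement as a direct instantiation of Theorem~\ref{adaptML}, taking $X = \mathbf{X}$ (the random dataset, with marginal $\mathcal{P}_X = \mathcal{P}^n$), $Y = \mathcal{A}(\mathbf{X})$, and identifying the output alphabet $\mathcal{Y}$ with the set of test statistics $\mathcal{T}$. The event $E \subseteq \mathcal{X}^n \times \mathcal{T}$ is the set of pairs $(\mathbf{x}, t)$ such that the null hypothesis associated with $t$ holds and the p-value of $t$ evaluated on $\mathbf{x}$ is at most $\sigma$. Once the problem is cast in this form, Theorem~\ref{adaptML} immediately yields $\mathbb{P}(E) \le \exp(\Le(\mathbf{X} \to \mathcal{A}(\mathbf{X}))) \cdot \max_{t \in \mathcal{T}} \mathcal{P}_X(E_t)$, so the whole proof reduces to controlling the fibers $E_t$.

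First I would fix an arbitrary statistic $t \in \mathcal{T}$ and examine its fiber $E_t = \{\mathbf{x} : (\mathbf{x}, t) \in E\}$. There are two cases. If the null hypothesis for $t$ is false, then no pair $(\mathbf{x}, t)$ can lie in $E$, so $E_t = \emptyset$ and $\mathcal{P}_X(E_t) = 0$. If the null hypothesis for $t$ is true, then $E_t = \{\mathbf{x} : \text{the p-value of } t \text{ on } \mathbf{x} \le \sigma\}$; here I would invoke the defining property of a significance level, namely that for a \emph{fixed}, data-independent test statistic the probability under the null of observing a p-value at most $\sigma$ is at most $\sigma$. This is precisely the non-adaptive guarantee that significance testing provides, and it gives $\mathcal{P}_X(E_t) \le \sigma$.

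Combining the two cases, $\mathcal{P}_X(E_t) \le \sigma$ for every $t \in \mathcal{T}$, hence $\max_{t \in \mathcal{T}} \mathcal{P}_X(E_t) \le \sigma$. Substituting this into the bound from Theorem~\ref{adaptML} gives $\mathbb{P}(E) \le \exp(\Le(\mathbf{X} \to \mathcal{A}(\mathbf{X}))) \cdot \sigma$, as claimed.

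The main conceptual point — and the only step that is not purely mechanical — is the fiber bound: it is crucial that once $t$ is \emph{frozen}, the false-discovery control $\mathcal{P}_X(E_t) \le \sigma$ is exactly the classical, non-adaptive significance guarantee, which holds precisely because the frozen $t$ no longer depends on the data. The factor $\exp(\Le(\mathbf{X} \to \mathcal{A}(\mathbf{X})))$ is then the correction that accounts for the adaptivity introduced by letting $t = \mathcal{A}(\mathbf{X})$ be chosen as a function of the same dataset on which the p-value is computed. I expect no real obstacle beyond stating the p-value property carefully and dispatching the trivial case in which the null is false.
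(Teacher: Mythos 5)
Your proposal is correct and matches the paper's intended argument: the paper presents Theorem~\ref{hypTestML} as an immediate corollary of Theorem~\ref{adaptML}, with exactly your instantiation ($X=\mathbf{X}$, $Y=\mathcal{A}(\mathbf{X})$, $\mathcal{Y}=\mathcal{T}$) and the fibers $E_t$ bounded by $\sigma$ via the classical non-adaptive significance guarantee for a fixed statistic. Your explicit case split (empty fiber when the null is false) is a careful spelling-out of what the paper leaves implicit, nothing more.
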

This result suggests a very simple approach: if the analyst wishes to achieve a bound of $\delta$ on the probability of making a false discovery in adaptive settings, the significance level $\sigma$ to be used should be no higher than $\delta/\exp(\Le(\mathbf{X}\to \A(\mathbf{X}))$. Suppose, as typically done in the field, that we wish to achieve an upper-bound of $\delta=0.05$. Whenever we intend to choose a statistic from a set $\mathcal{T}$, one straight-forward approach would be to use as a significance value the quantity $0.05/|\mathcal{T}|$. With $|\mathcal{T}|=10$, this would imply choosing a significance level $\sigma=0.005$. The application of Theorem \ref{hypTestML} is thus almost immediate.
Once again, if $\mathcal{A}$ is independent from $\mathbf{X}$, we recover the classical bound of $\sigma$. Also, Theorem \ref{hypTestML} is similar to a result obtained in \citep{maxInfo} and that uses the notion of $\beta-$approximate max-information instead. 
\subsection{Maximal Leakage and Mutual Information}
\label{sec:leakageAndMutI}
One interesting result in the field, that connects the generalization error with Mutual Information, under the same assumptions of Theorem~\ref{generrML}, is the following (Theorem 8 of \citep{learningMI}): 
\begin{equation}\label{generrMI}
\mathbb{P}(E) \leq \frac{I(S;\mathcal{A}(S))+\log 2}{2n\eta^2-\log 2}.
\end{equation}
Let us compare this result with Theorem~\ref{generrML} in terms of sample complexity.
\begin{mydef}
	Fix $\eta, \delta \in (0,1)$. Let $\mathcal{H}$ be an hypothesis class. The sample complexity of $\mathcal{H}$ with respect to $(\eta,\delta)$, denoted by $m_\mathcal{H}(\epsilon,\delta)$,
	is defined as the smallest $n \in \mathbb{N}$ for which there exists a learning algorithm $\mathcal{A}$ such that, for every distribution $\mathcal{P}$ over the domain $\mathcal{X}$ yields
	\begin{equation} \mathbb{P}(\text{gen-err}_\mathcal{P}(\mathcal{A},S)>\eta)\leq \delta. \end{equation}
	If there is no such $n$ then $m_\mathcal{H}(\eta,\delta)=\infty$.
\end{mydef}\noindent
From Theorem~\ref{generrML}, it follows that using a sample size of $n\geq \left(\frac{ \mathcal{L}(S \to \mathcal{A}(S))+\ln(1/\delta)}{\eta^2}\right)$ yields a learner for $\mathcal{H}$ with accuracy $\eta$ and confidence $\delta$ and this, in turn, implies that $m_\mathcal{H}(\eta,\delta)= O\left(\frac{ \mathcal{L}(S \to \mathcal{A}(S))+\ln(1/\delta)}{\eta^2}\right)$. Using the same reasoning with inequality~\eqref{generrMI}, we get  $m_\mathcal{H}(\eta,\delta)= O\left(\frac{I(\A(S);S)}{\eta^2}\cdot \frac{1}{\delta}\right).$ The reduction in the sample complexity, in this regime, is exponential in $\delta$. Moreover, as shown in \citep{learningMI}, if we consider the case where $\mathcal{X}=[d]$ and $\mathcal{H}=\{0,1\}^{\mathcal{X}}$, we have that the VC-dimension of $\mathcal{H}$ is $d$ and, being  $\Le(S\to\mathcal{A}(S))\leq \log(|\mathcal{H}|)\leq d$, our bound recovers  exactly the VC-dimension bound~\citep{learningBook}, which is always sharp.

\subsection{Maximal Leakage and Max Information}
\label{sec:leakageAndMI}
One of the main reasons that brought to the definition of approximate max-information is related to the generalization guarantees it provides, now recalled for convenience.
\begin{mylemma}[Generalization via Max-Information, Thm. 13 of \citep{genAdap}]\label{GenMaxInfo}
	Let $\mathbf{X}$ be a random dataset in $\X^n$ and let $\A:\X^n\to \Y$ be such that for some $\beta\geq 0$, $I_\infty^\beta(\mathbf{X},\A(\mathbf{X}))=k$ then, for any event $\mathcal{O}\subseteq X^n\times \Y$:
	\begin{align}\Prob_{(\mathbf{X},\A(\mathbf{X}))}(\{(\mathbf{X},\A(\mathbf{X}))\in\mathcal{O}\})\leq e^k\cdot\Prob_{\mathbf{X} \times \A(\mathbf{X})}(\{(\mathbf{X},\A(\mathbf{X}))\in\mathcal{O}\})+ \beta \end{align}
\end{mylemma}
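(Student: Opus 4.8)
The plan is to derive the inequality directly from the definition of $\beta$-approximate max-information, since the statement is essentially a rearrangement of that definition rather than a substantive new fact. Recall that $I_\infty^\beta(\mathbf{X}, \A(\mathbf{X}))$ is defined as the logarithm of a supremum, taken over all events whose joint probability exceeds $\beta$, of the ratio $(\Prob_{(\mathbf{X},\A(\mathbf{X}))}(\mathcal{O}) - \beta) / \Prob_{\mathbf{X}\times\A(\mathbf{X})}(\mathcal{O})$. So the first step is to write this definition out explicitly and observe that the hypothesis $I_\infty^\beta(\mathbf{X},\A(\mathbf{X})) = k$ bounds every such ratio by $e^k$.

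Then I would split into two cases according to whether the event $\mathcal{O}$ has joint probability above or below the threshold $\beta$. When $\Prob_{(\mathbf{X},\A(\mathbf{X}))}(\mathcal{O}) \le \beta$, the claimed bound holds trivially, because the right-hand side already contains the additive term $\beta \ge \Prob_{(\mathbf{X},\A(\mathbf{X}))}(\mathcal{O})$ together with the nonnegative term $e^k\,\Prob_{\mathbf{X}\times\A(\mathbf{X})}(\mathcal{O})$. When $\Prob_{(\mathbf{X},\A(\mathbf{X}))}(\mathcal{O}) > \beta$, the event $\mathcal{O}$ lies in the index set of the supremum, so the defining bound $(\Prob_{(\mathbf{X},\A(\mathbf{X}))}(\mathcal{O}) - \beta)/\Prob_{\mathbf{X}\times\A(\mathbf{X})}(\mathcal{O}) \le e^k$ applies; multiplying through by the denominator and moving $\beta$ to the other side yields exactly the desired inequality.

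The only point requiring a moment of care — and the closest thing to an obstacle — is to justify that the denominator $\Prob_{\mathbf{X}\times\A(\mathbf{X})}(\mathcal{O})$ is strictly positive in the second case, so that the rearrangement of the ratio is legitimate. This follows from support containment on finite alphabets: if a point $(x,y)$ has positive joint probability, then both marginals are positive at $x$ and at $y$, hence the product measure is also positive there. Consequently $\Prob_{(\mathbf{X},\A(\mathbf{X}))}(\mathcal{O}) > \beta \ge 0$ forces $\Prob_{\mathbf{X}\times\A(\mathbf{X})}(\mathcal{O}) > 0$, and the division is well defined. Since this is Theorem 13 of \citep{genAdap}, recalled here only for comparison with the maximal-leakage bounds, I would keep the argument at this level and not re-derive the structural properties of $I_\infty^\beta$ itself.
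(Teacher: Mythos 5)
Your proof is correct and matches the intended argument: the paper states this lemma without proof (it is imported as Theorem 13 of \citep{genAdap}), and the derivation there is precisely your unpacking of the definition of $D_\infty^\beta$ --- the trivial case $\Prob_{(\mathbf{X},\A(\mathbf{X}))}(\mathcal{O})\leq\beta$ plus the rearrangement of the defining ratio when $\Prob_{(\mathbf{X},\A(\mathbf{X}))}(\mathcal{O})>\beta$. Your support-containment remark justifying positivity of $\Prob_{\mathbf{X}\times\A(\mathbf{X})}(\mathcal{O})$ in the second case is the right way to make the division legitimate on finite alphabets.
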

The result looks quite similar to Theorem \ref{generrML}, but the two measures, max-information and maximal leakage, although related, can be quite different. In this section we will analyze the connections and differences between the two measures underlining the corresponding implications.
Thanks to the constraint on the distributions that a bound on max-information imposes, we can formalize the following connection.
\begin{mytheorem}
	Let $\mathcal{A}:\mathcal{X}^n\to\mathcal{Y}$ be a randomized algorithm such that $I_\infty(\mathbf{X};\mathcal{A}(X))\leq k$. Then, $\Le(\mathbf{X}\to\mathcal{A}(X))\leq k.$
\end{mytheorem}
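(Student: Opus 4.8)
The plan is to turn the hypothesis on max-information into a pointwise bound on the conditional law $P_{Y\mid\mathbf{X}}$ and then substitute it directly into the closed-form expression for maximal leakage given in the excerpt. Throughout, write $Y=\mathcal{A}(\mathbf{X})$.

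First I would recall the definition of (exact) max-information and reduce it to its pointwise form. On finite alphabets the event-based ratio defining $I_\infty$ is maximized on a singleton, so
\begin{equation*}
I_\infty(\mathbf{X};Y) = \log \max_{x,y:\, P_{\mathbf{X}Y}(x,y)>0} \frac{P_{\mathbf{X}Y}(x,y)}{P_{\mathbf{X}}(x)\,P_Y(y)}.
\end{equation*}
Hence the assumption $I_\infty(\mathbf{X};Y)\le k$ is equivalent to requiring $P_{\mathbf{X}Y}(x,y) \le e^{k}\,P_{\mathbf{X}}(x)\,P_Y(y)$ for every pair $(x,y)$ with positive joint mass. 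Dividing by $P_{\mathbf{X}}(x)$, which is legitimate on the support, yields the conditional form
\begin{equation*}
P_{Y\mid\mathbf{X}}(y\mid x) \le e^{k}\,P_Y(y) \qquad \text{for all } x \text{ with } P_{\mathbf{X}}(x)>0 \text{ and all } y.
\end{equation*}

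Next I would plug this into the closed-form identity $\exp(\Le(\mathbf{X}\to Y)) = \sum_{y}\max_{x:\,P_{\mathbf{X}}(x)>0} P_{Y\mid\mathbf{X}}(y\mid x)$ from the excerpt. Since the bound above holds uniformly over $x$ in the support, it survives the inner maximization, giving $\max_{x} P_{Y\mid\mathbf{X}}(y\mid x) \le e^{k}P_Y(y)$ for each $y$. Summing over $y$ and using $\sum_y P_Y(y)=1$ then gives $\exp(\Le(\mathbf{X}\to Y)) \le e^{k}$, and taking logarithms yields $\Le(\mathbf{X}\to Y)\le k$, as claimed.

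The argument is short, and the only real point of care — which I would flag as the main (if modest) obstacle — is the reduction of the definition of max-information to a pointwise ratio: one must verify that the supremum over events $\mathcal{O}$ of $\Prob_{\mathbf{X}Y}(\mathcal{O})/\Prob_{\mathbf{X}\times Y}(\mathcal{O})$ is attained at an atom, so that the event-based definition coincides with the maximum of the likelihood ratio over singletons. Once this identification is in hand the inequality is a direct substitution; it also makes transparent that $\Le(\mathbf{X}\to Y)\le I_\infty(\mathbf{X};Y)$ holds \emph{in general}, the difference being that $\Le$ sums the per-output worst-case conditional over $y$ whereas $I_\infty$ records the single global worst-case ratio.
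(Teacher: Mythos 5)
Your proof is correct and is essentially the paper's own argument, just with the steps made explicit: the paper likewise passes from $I_\infty(\mathbf{X};Y)\le k$ to the pointwise bound $\mathbb{P}(Y=y\mid \mathbf{X}=x)\le e^k\,\mathbb{P}(Y=y)$ for all $x,y$, and then concludes via the closed-form expression $\exp(\Le(\mathbf{X}\to Y))=\sum_y \max_{x:\,P_{\mathbf{X}}(x)>0}P_{Y\mid\mathbf{X}}(y\mid x)\le e^k$. The one point of care you flag --- reducing the event-based definition to a maximum over singletons --- is in fact immediate here, since the paper defines the exact max-divergence $D_\infty$ pointwise over atoms (only the $\delta$-approximate version is event-based), though your verification is correct and your closing observation that the result is just the general inequality $\Le(\mathbf{X}\to Y)\le I_\infty(\mathbf{X};Y)$ is apt.
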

\begin{proof}$Y=\A(\mathbf{X})$. Having a bound of $k$ on the Max-Information of $\mathcal{A}$ means that for all $x^n \in\mathcal{X}^n,$ and $y \in \mathcal{Y}, \mathbb{P}(Y=y|X^n=x^n)\leq e^k \cdot \mathbb{P}(Y=y)$; and this implies that $\Le(\mathbf{X}\to Y)\leq k.$
\end{proof} \noindent
With respect to $\beta$-approximate max-information instead, we can state the following.
\begin{mytheorem}\label{bMIandML}
	Let $\mathcal{A}:\mathcal{X}^n\to\mathcal{Y}$ be a randomized algorithm. Let $\mathbf{X}$ be a random variable distributed over $\mathcal{X}^n$ and let $Y=\mathcal{A}(X)$. For any $\beta \in (0,1)$ \begin{equation} I_{\infty}^\beta(\mathbf{X};\A(\mathbf{X})) \leq \Le(\mathbf{X}\to \A(\mathbf{X})) + \log\bigg({\frac{1}{\beta}}\bigg).\end{equation}
\end{mytheorem}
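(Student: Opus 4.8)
The plan is to work with the event/measure-theoretic formulation of approximate max-information, namely
\[ I_\infty^\beta(\mathbf{X};Y) = \log \sup_{\mathcal{O}} \frac{\Prob_{(\mathbf{X},Y)}(\mathcal{O}) - \beta}{\Prob_{\mathbf{X}\times Y}(\mathcal{O})}, \]
where $Y = \A(\mathbf{X})$ and the supremum ranges over events $\mathcal{O}\subseteq \X^n\times\Y$ with $\Prob_{\mathbf{X}\times Y}(\mathcal{O})>0$ (this is the form consistent with Lemma~\ref{GenMaxInfo}). With this definition the target inequality is equivalent to showing, for every such event $\mathcal{O}$, that
\[ \Prob_{(\mathbf{X},Y)}(\mathcal{O}) \leq \frac{\exp(\Le(\mathbf{X}\to Y))}{\beta}\cdot \Prob_{\mathbf{X}\times Y}(\mathcal{O}) + \beta. \]
So I would fix an arbitrary event $\mathcal{O}$ and aim to establish this single change-of-measure-with-slack-$\beta$ bound; taking logarithms then yields the claim.

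The engine of the argument is a pointwise rewriting of maximal leakage. For each $y$ with $P_Y(y)>0$ set $c_y = \max_{x: P_X(x)>0} \frac{P_{Y|\mathbf{X}}(y|x)}{P_Y(y)}$, which equals $\exp(D_\infty(P_{\mathbf{X}|Y=y}\|P_{\mathbf{X}}))$ by Bayes' rule. Summing the closed form $\exp(\Le(\mathbf{X}\to Y)) = \sum_y \max_x P_{Y|\mathbf{X}}(y|x)$ gives the identity $\exp(\Le(\mathbf{X}\to Y)) = \sum_y P_Y(y)\, c_y = \E_Y[c_Y]$. I would then set the threshold $\tau = \exp(\Le(\mathbf{X}\to Y))/\beta$ and apply Markov's inequality to the nonnegative random variable $c_Y$, obtaining $P_Y(\{y : c_y > \tau\}) \leq \E_Y[c_Y]/\tau = \beta$. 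In words, the set of outputs $y$ whose conditional law $P_{\mathbf{X}|Y=y}$ deviates strongly from the prior $P_{\mathbf{X}}$ carries at most $\beta$ mass under $P_Y$.

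With this in hand, I would decompose $\Prob_{(\mathbf{X},Y)}(\mathcal{O}) = \sum_y P_Y(y)\,P_{\mathbf{X}|Y=y}(\mathcal{O}_y)$, with fibers $\mathcal{O}_y = \{x : (x,y)\in\mathcal{O}\}$, into the good outputs $\{y: c_y \leq \tau\}$ and the bad outputs $\{y: c_y > \tau\}$. On the good part, the definition of $c_y$ gives $P_{\mathbf{X}|Y=y}(\mathcal{O}_y) \leq c_y P_{\mathbf{X}}(\mathcal{O}_y) \leq \tau P_{\mathbf{X}}(\mathcal{O}_y)$, so this part is at most $\tau \sum_y P_Y(y) P_{\mathbf{X}}(\mathcal{O}_y) = \tau\, \Prob_{\mathbf{X}\times Y}(\mathcal{O})$. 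On the bad part, bounding $P_{\mathbf{X}|Y=y}(\mathcal{O}_y) \leq 1$ leaves $\sum_{y: c_y>\tau} P_Y(y) = P_Y(\{c_Y>\tau\}) \leq \beta$. Adding the two contributions produces exactly the displayed change-of-measure bound, and hence the theorem.

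I expect the only real subtlety to be the step that converts the closed form of maximal leakage into the expectation $\E_Y[c_Y]$ and thereby licenses the Markov argument; the good/bad split and the fiberwise change of measure are then routine. One should also verify the degenerate cases (events with $\Prob_{\mathbf{X}\times Y}(\mathcal{O})=0$, or outputs $y$ with $P_Y(y)=0$), but on the finite alphabets assumed throughout these cause no difficulty.
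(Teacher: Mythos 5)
Your proof is correct, and its engine is the same as the paper's: both arguments rest on the identity
\begin{equation*}
\exp(\ml{\mathbf{X}}{Y})=\E_Y\Bigl[\,\sup_{x}\tfrac{\Prob(\mathbf{X}=x,Y=y)}{\Prob(\mathbf{X}=x)\Prob(Y=y)}\Bigr]
\end{equation*}
followed by Markov's inequality with threshold $\tau=\exp(\ml{\mathbf{X}}{Y})/\beta$. Where you diverge is in how the tail bound is converted into a bound on $I_\infty^\beta$: the paper applies Markov to the joint likelihood ratio $z(\mathbf{X},Y)$ under $\mathcal{P}_{\mathbf{X}Y}$ (bounding $\E_{XY}[z]\le\E_Y[\sup_x z]$ along the way) and then invokes Lemma~\ref{maxDivProb} (Lemma 18 of \citep{genAdap}) as a black box to pass from ``the ratio exceeds $e^k$ with probability at most $\beta$'' to $D_\infty^\beta\le k$; you instead apply Markov to the per-output quantity $c_Y=\sup_x z(x,Y)$, whose $P_Y$-expectation is \emph{exactly} $\exp(\ml{\mathbf{X}}{Y})$, and then prove the conversion yourself via the good/bad split and the fiberwise change of measure --- in effect inlining (and slightly strengthening) the cited lemma. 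Your variant buys two things: the argument is self-contained, and your bad set $\{y:c_y>\tau\}$ is measurable in $y$ alone, which is precisely what makes the decomposition $\Prob_{(\mathbf{X},Y)}(\mathcal{O})=\sum_y P_Y(y)\,P_{\mathbf{X}|Y=y}(\mathcal{O}_y)$ go through cleanly (in the paper's route the exceptional set lives on pairs $(x,y)$, and the analogous bookkeeping is hidden inside the proof of Lemma 18). Your uniform inequality $\Prob_{(\mathbf{X},Y)}(\mathcal{O})\le\tau\,\Prob_{\mathbf{X}\times Y}(\mathcal{O})+\beta$ also disposes of the degenerate cases automatically: if $\Prob_{\mathbf{X}\times Y}(\mathcal{O})=0$ it forces $\Prob_{(\mathbf{X},Y)}(\mathcal{O})\le\beta$, so such events never enter the maximization defining $D_\infty^\beta$, and outputs with $P_Y(y)=0$ contribute nothing to either side of the identity $\E_Y[c_Y]=\exp(\ml{\mathbf{X}}{Y})$. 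The paper's proof is shorter on the page by outsourcing to \citep{genAdap}; yours is the more transparent one and is the version to prefer if one wants the argument closed under one roof.
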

Before showing this theorem we need the following, intermediate result. We denote with $\mathcal{P}_X$ the distribution associated with $X$.
\begin{mylemma}[Lemma 18 of \citep{genAdap} ]\label{maxDivProb}
	Let $X,Y$ be two random variables over the same domain $\mathcal{X}$. If $\mathbb{P}_{x\sim \mathcal{P}_X}\bigg(\frac{\mathbb{P}(X=x)}{\mathbb{P}(Y=x)}\geq e^k \bigg)\leq \beta$ then $D_\infty^\beta(\mathcal{P}_X||\mathcal{P}_Y)\leq k$.
\end{mylemma}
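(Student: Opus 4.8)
The plan is to reduce the divergence bound to the elementary ``additive-plus-multiplicative'' characterization of the $\beta$-approximate max-divergence, and then verify that characterization by a single set-splitting argument. Recall that $D_\infty^\beta(\mathcal{P}_X \| \mathcal{P}_Y) \leq k$ is equivalent to the statement that every event $S \subseteq \mathcal{X}$ satisfies $\mathcal{P}_X(S) \leq e^k \mathcal{P}_Y(S) + \beta$: with the usual definition $D_\infty^\beta(P\|Q) = \max_{S: P(S) \geq \beta} \log\frac{P(S)-\beta}{Q(S)}$, the inequality $D_\infty^\beta \leq k$ is precisely $P(S) - \beta \leq e^k Q(S)$ for all $S$. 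So it suffices to establish this last inequality for $\mathcal{P}_X, \mathcal{P}_Y$ and an arbitrary event $S$.

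First I would isolate the ``bad'' set on which the pointwise likelihood ratio is large. Define $B = \{x \in \mathcal{X} : \frac{\mathbb{P}(X=x)}{\mathbb{P}(Y=x)} \geq e^k\}$, with the convention that $x \in B$ whenever $\mathbb{P}(Y=x)=0$ but $\mathbb{P}(X=x)>0$ (so that the only points surviving outside $B$ have finite ratio). The hypothesis says exactly that $\mathcal{P}_X(B) \leq \beta$. I then fix any event $S$ and decompose it as $S = (S \cap B) \cup (S \setminus B)$, handling the two pieces separately.

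On $S \setminus B$, every point satisfies $\mathbb{P}(X=x) < e^k \mathbb{P}(Y=x)$ by definition of $B$, so summing over these points gives $\mathcal{P}_X(S \setminus B) \leq e^k \mathcal{P}_Y(S \setminus B) \leq e^k \mathcal{P}_Y(S)$. On $S \cap B$ I would simply use monotonicity, $\mathcal{P}_X(S \cap B) \leq \mathcal{P}_X(B) \leq \beta$. Adding the two bounds yields $\mathcal{P}_X(S) \leq e^k \mathcal{P}_Y(S) + \beta$, which is exactly what the characterization demands, completing the argument.

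There is no genuinely hard step here: the entire content is the split of $S$ along the threshold set $B$, combined with the pointwise ratio bound off $B$ and the tail hypothesis on $B$. The only points demanding care are to state the characterization of $D_\infty^\beta$ in the correct additive-error form, and to align the threshold defining $B$ with the $\geq e^k$ convention of the hypothesis, so that the ``good'' set $S \setminus B$ genuinely obeys the strict multiplicative bound while points with $\mathbb{P}(Y=x)=0$ are safely absorbed into $B$.
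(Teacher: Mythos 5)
Your argument is correct. The paper does not actually prove this lemma---it is imported as Lemma 18 of \citep{genAdap} and used as a black box in the proof of Theorem \ref{bMIandML}---but your proof, splitting an arbitrary event $S$ along the bad set $B=\{x:\mathbb{P}(X=x)\geq e^k\,\mathbb{P}(Y=x)\}$, bounding $\mathcal{P}_X(S\cap B)\leq\beta$ by the tail hypothesis and $\mathcal{P}_X(S\setminus B)\leq e^k\,\mathcal{P}_Y(S\setminus B)$ pointwise, is the standard argument and essentially the one in that reference. The only fine print is harmless: you need only the direction in which the uniform inequality $\mathcal{P}_X(S)\leq e^k\,\mathcal{P}_Y(S)+\beta$ implies $D_\infty^\beta(\mathcal{P}_X\Vert\mathcal{P}_Y)\leq k$, and this holds under the paper's definition (maximum over events with $\mathcal{P}_X(S)>\beta$; events with $\mathcal{P}_Y(S)=0$ then satisfy $\mathcal{P}_X(S)\leq\beta$ and never enter the maximum, so the $\geq$ versus $>$ threshold in your statement of the definition is immaterial), while your convention of absorbing points with $\mathbb{P}(Y=x)=0<\mathbb{P}(X=x)$ into $B$ is exactly what the ratio-$\geq e^k$ event in the hypothesis means.
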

\begin{proof}
	Fix any $\beta>0$. Denote with $Y=\A(\mathbf{X})$, we want to show that $D_\infty^\beta (\mathcal{P}_{(\mathbf{X},\A(\mathbf{X}))}||\mathcal{P}_{\mathbf{X}}\mathcal{P}_{\A(\mathbf{X})})\leq \Le(\mathbf{X}\to Y) + \log{\big({\frac{1}{\beta}}\big)}$ using Lemma \ref{maxDivProb}. 
	Notice that $\Le(\mathbf{X}\to Y)=\log\E_Y\bigg[\sup_{x\in\mathcal{X}^n}\frac{\mathbb{P}(\mathbf{X}=x, Y=y)}{\mathbb{P}(\mathbf{X}=x)\mathbb{P}(Y=y)}\bigg]$.\\
	Using Markov Inequality we can proceed in the following way:
	\begin{align}
	\mathbb{P}_{(x,y)\sim \mathcal{P}_{XY}}\bigg(\frac{\mathbb{P}(\mathbf{X}=x, Y=y)}{\mathbb{P}(\mathbf{X}=x)\mathbb{P}(Y=y)}\geq \frac{e^{\Le(\mathbf{X}\to Y)}}{\beta} \bigg) &\leq \frac{\mathbb{E}_{XY}\bigg[\frac{\mathbb{P}(\mathbf{X}=x, Y=y)}{\mathbb{P}(\mathbf{X}=x)\mathbb{P}(Y=y)}\bigg]\cdot \beta}{e^{\Le(\mathbf{X}\to Y)}} \\ &\leq \frac{\E_Y\bigg[\sup_{x\in\mathcal{X}^n}\frac{\mathbb{P}(\mathbf{X}=x, Y=y)}{\mathbb{P}(\mathbf{X}=x)\mathbb{P}(Y=y)}\bigg]\cdot \beta}{e^{\Le(\mathbf{X}\to Y)}} \\ &= \beta.
	\end{align} 
	Hence, $I_{\infty}^\beta(\mathbf{X};\A(\mathbf{X})) = D_\infty^\beta (\mathcal{P}_{(\mathbf{X},\A(\mathbf{X}))}||\mathcal{P}_{\mathbf{X}}\mathcal{P}_{\A(\mathbf{X})})\leq \log\big(\frac{e^{\Le(\mathbf{X}\to Y)}}{\beta} \big) = \Le(\mathbf{X}\to Y) + \log\big(\frac{1}{\beta}\big).$
\end{proof}
Here, the relationship between the measures is inverted, but the role played by $\beta$ can lead to undesirable behaviours of $\beta$-approx MI. 
The following example, indeed, shows how $\beta$-approx MI can be unbounded while, in the discrete case, the maximal leakage between two random variables is always bounded by the logarithm of the smallest cardinality.
\begin{examp}
Let us fix a $\beta \in (0,1).$
Suppose $X\sim\text{Ber}(2\beta)$. We have that $\ml{X}{X}=\log |\text{supp}(X)|=\log2$. For the $\beta-$approximate max-information we have:
$I_\infty^\beta(X;X) \geq \log ((2\beta-\beta)/\beta^2) = \log(1/\beta)$. It can thus be arbitrarily large.
\end{examp}
Another interesting characteristic of max-information is that, differently from differential privacy, it can be bounded even if we have deterministic algorithms.
It is easy to see that whenever there is a deterministic mapping and $\epsilon$-Differential Privacy is enforced on it, a lower bound on $\epsilon$ of $+\infty$ is retrieved. Trying to relax it to $(\epsilon,\delta)-$Differential Privacy does not help either, as one would need $\delta\geq 1$ rendering it practically useless. Max-information, instead, can be finite, and this observation is implied by the connection with what in literature is known as \enquote{description length} of an algorithm, and synthesized in the following result \citep{genAdap}:
Let $\mathcal{A}:\mathcal{X}^n\to\mathcal{Y}$ be a randomized algorithm, for every $\beta>0$, \begin{equation} I_{\infty}^{\beta}(\mathcal{A},n)\leq \log{\bigg(\frac{|\mathcal{Y}|}{\beta}\bigg)}.\label{descrMI}\end{equation}
In terms of generalization guarantees, according to Lemma \ref{GenMaxInfo}, this translates to:
\begin{align}\label{bmaxinfo}
\Prob_{(\mathbf{X},\A(\mathbf{X}))}(\{(\mathbf{X},\A(\mathbf{X}))\in\mathcal{O}\})\leq \frac{|\Y|}{\beta}\cdot\Prob_{\mathbf{X} \times \A(\mathbf{X})}(\{(\mathbf{X},\A(\mathbf{X}))\in\mathcal{O}\})+ \beta
\end{align}
In contrast, the bound on maximal leakage that bounded description length provides is the following:
\begin{mylemma}\label{mlAndBDL}
	Let $\mathcal{A}:\mathcal{X}^n\to\mathcal{Y}$ be a randomized algorithm, then $\Le(\mathbf{X}\to\mathcal{A}(\mathbf{X}))\leq \log(|\mathcal{Y}|)$.
\end{mylemma}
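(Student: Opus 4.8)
The plan is to invoke the closed-form expression for maximal leakage and bound each term of the defining sum by $1$. Writing $Y=\A(\mathbf{X})$, recall from~\eqref{eq:ml1} (equivalently, the formula stated just after Definition~\ref{leakage}) that
\begin{equation}
\Le(\mathbf{X}\to Y) = \log \sum_{y\in\mathcal{Y}} \max_{x:P_{\mathbf{X}}(x)>0} P_{Y|\mathbf{X}}(y|x).
\end{equation}
The key observation is that each inner maximum is a conditional probability, hence at most $1$: for every $y\in\mathcal{Y}$ and every $x$ with $P_{\mathbf{X}}(x)>0$ we have $P_{Y|\mathbf{X}}(y|x)\leq 1$, and therefore $\max_{x}P_{Y|\mathbf{X}}(y|x)\leq 1$.

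The next step is simply to sum over the finite output alphabet. Since there are $|\mathcal{Y}|$ terms, each bounded by $1$, the total is at most $|\mathcal{Y}|$; taking logarithms then gives $\Le(\mathbf{X}\to Y)\leq\log|\mathcal{Y}|$, as claimed. Alternatively, and even more directly, the statement is nothing but the $\log|Y|$ half of property~3 in Lemma~\ref{leakageProps}, instantiated with $X=\mathbf{X}$ and $Y=\A(\mathbf{X})$, so it can be obtained with no additional argument at all.

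There is essentially no obstacle here. The only point worth noting is that the maximum in the leakage formula ranges over $x$ in the support of $P_{\mathbf{X}}$ only; but this restriction can only shrink each term, so the per-term bound by $1$ (and hence the overall bound by $|\mathcal{Y}|$) holds regardless of the marginal $P_{\mathbf{X}}$. The randomness of $\A$ plays no role beyond ensuring that $P_{Y|\mathbf{X}}$ is a genuine conditional distribution, and the bound is attained, for instance, when $\A$ outputs a value independent of and uniform over $\mathcal{Y}$ for at least one input in the support.
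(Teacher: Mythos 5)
Your proof is correct and is in essence the paper's own route: the paper gives no separate argument for Lemma \ref{mlAndBDL}, treating it as the $\log|\mathcal{Y}|$ half of property 3 in Lemma \ref{leakageProps}, and your one-line derivation from the closed form $\Le(\mathbf{X}\to Y)=\log\sum_{y\in\mathcal{Y}}\max_{x:P_{\mathbf{X}}(x)>0}P_{Y|\mathbf{X}}(y|x)$, bounding each summand by $1$ since it is a conditional probability, is exactly how that cited property is verified. One caveat on your closing side remark: the tightness example is wrong. If $\A$ outputs a uniform value on a single input in the support, each inner maximum is only guaranteed to be at least $1/|\mathcal{Y}|$, so this does not force the sum to equal $|\mathcal{Y}|$ (and a uniform output independent of $\mathbf{X}$ on \emph{all} inputs gives leakage $0$, not $\log|\mathcal{Y}|$); equality $\Le(\mathbf{X}\to\A(\mathbf{X}))=\log|\mathcal{Y}|$ requires that for every $y\in\mathcal{Y}$ there exist some $x$ in the support with $P_{Y|\mathbf{X}}(y|x)=1$, e.g.\ a deterministic $\A$ that is surjective onto $\mathcal{Y}$. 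Since the lemma only asserts the upper bound, this slip does not affect the validity of your proof.
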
 
Lemma \ref{mlAndBDL} along with Theorem \ref{adaptML}, immediately translates in the following generalization guarantees:
\begin{align}
\Prob_{(\mathbf{X},\A(\mathbf{X}))}(\{(\mathbf{X},\A(\mathbf{X}))\in\mathcal{O}\})\leq |\Y|\cdot\max_{Y=\A(\mathbf{X})}\Prob_{\mathbf{X} }(\{(\mathbf{X}\in\mathcal{O}_Y\})
\end{align}
The bound is different from the one in \eqref{bmaxinfo} as on the right-hand side we have \\$\max_{Y=\A(\mathbf{X})}\Prob_{\mathbf{X} }(\{(\mathbf{X}\in\mathcal{O}_Y\})$ while in  \eqref{bmaxinfo} appears $\E_{Y=\A(\mathbf{X})}\Prob_{\mathbf{X} }(\{(\mathbf{X}\in\mathcal{O}_Y\})$. For practical purposes though, the quantity is typically bounded for every $Y$, as for instance when using McDiarmid's inequality, like in Theorem \ref{generrML}. This would imply a bound on both the max and the expectation, and when such technique is used then our bound is clearly tighter as $0<\beta\leq 1$.
It is also worth noticing that \eqref{descrMI} can be seen as a consequence of Theorem \ref{bMIandML} and Lemma \ref{mlAndBDL}.
 
 Once again, due to the presence of $\beta$, maximal leakage can provide tighter bounds, as we can see from the following example.
\begin{examp}
Let $\A : \X \to \Y$ be an algorithm. Denoting with $|\Y|$ the size of the range of $\A$, we can bound Approximate Max-Information by $\log(|Y|/\beta)$ while maximal leakage is always bounded by $\log(|\Y|)$. Since $\beta$ is typically very small in the key applications, and the corresponding multiplicative factors in the bounds are $(|Y|/\beta)$ and $|\Y|$, the difference between the two bounds can be substantial. 
\end{examp}
It is important to also notice that the difference between the two measures is not uniquely restricted to deterministic mechanisms. The following is an example of a randomized mapping where maximal leakage is smaller than $\beta$-approximate-max-information, for small $\beta$.
\begin{examp}
Consider $X\sim\text{Ber}(1/2)$ and $Y=BEC_\alpha(X)$ (the output of a Binary Erasure Channel, with erasure probability $\alpha$, when X is transmitted). More formally, we have that $\Y=\{0,1,e \}$ and the following randomized mapping: $\Prob(Y=e|X=x) = \alpha$ and $\Prob(Y=x|X=x)=1-\alpha$.
In this case, the Maximal Leakage is $\ml{X}{Y}=\log(2-\alpha)$ \citep{leakageLong}; while, for $\beta$-Approximate Max-Information one finds (after a series of computations) that:  
                         $I_\infty^\beta(X;Y)=\log(2\cdot\max\{(1-\alpha-\beta)/(1-\alpha),  (1-\beta)/(1+\alpha)\});$
It is easy to see how for a fixed $\alpha$ and for $\beta$ going to $0$, Approximate Max-Information approaches $\log2$ while Maximal Leakage is strictly smaller.
\end{examp}
 Since we are mostly interested in the regimes where $\beta$ is small, these examples show how using Maximal Leakage provides tighter bounds and, in general, a wider applicability of our results as, in discrete settings, it is always bounded.

\newpage

\appendix
\section{Definitions and Tools}
\label{app:definitions}

\begin{mydef}
	Let $X,Y$ be two discrete random variables defined over the same alphabet $\mathcal{X}$, the max-divergence of $X$ from $Y$ is defined as:
	\begin{equation}D_{\infty}(X||Y)= \log{\max_{x\in\mathcal{X}}\frac{\Prob(\{X=x\})}{\Prob(\{Y=x\})}},\end{equation}
	while, the $\delta-$approximate max-divergence is defined as:
	\begin{equation}D_{\infty}^\delta(X||Y)= \log{\max_{\mathcal{O}\subseteq \mathcal{X}, \Prob(\{X\in\mathcal{O}\})>\delta}\frac{\Prob(\{X\in \mathcal{O}\})-\delta}{\Prob(\{Y\in\mathcal{O}\})}}.\end{equation}
\end{mydef}

\begin{mydef}[Differential Privacy \citep{genAdap}]
	Let $\epsilon,n>0$. A randomized algorithm $\mathcal{A}:\mathcal{X}^n \to \mathcal{Y}$ is said to be $\epsilon$-differentially private if, for all the pairs of datasets that differ in a single component $\mathbf{X},\mathbf{X}' \in \mathcal{X}^n$ we have that $D_{\infty}(\mathcal{A}(\mathbf{X)}||\mathcal{A}(\mathbf{X}'))\leq \epsilon.$
\end{mydef}
An important probability tool, often used in the field is McDiarmid's inequality, a concentration of measure for functions of a certain sensitivity. We will say that a function $f:\mathcal{X}^n\to\mathbb{R}$ has sensitivity $c$ if 
\begin{equation} |f(x_1,\ldots,x_i,\ldots,x_n)-f(x_1,\ldots,x'_i,\ldots,x_n)|\leq c\, \quad \forall x_1,\ldots,x_n,x'_i\in\mathcal{X}^{n+1}.\end{equation}
\begin{mylemma}[McDiarmid's inequality]\label{mcdiarmids}
	Let $X_1,\ldots,X_n$ be independent random variables taking values in the common alphabet $\mathcal{X}$. \\
	Let $f:\mathcal{X}^n\to \mathbb{R}$ be a function of sensitivity $c>0$.
	For every $\epsilon>0,$
	\begin{equation}\mathbb{P}(\{f(X_1,\ldots,X_n)-\E[f(X_1,\ldots,X_n)]\geq \epsilon \})\leq \exp\bigg(\frac{-2\epsilon^2}{n\cdot c^2}\bigg).\end{equation}
\end{mylemma}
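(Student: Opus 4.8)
The plan is to prove this via the method of bounded differences: construct a Doob martingale from $f(X_1,\ldots,X_n)$ and apply the exponential (Chernoff) moment method. Write $V = f(X_1,\ldots,X_n)$ and define, for $k = 0,1,\ldots,n$, the conditional expectations $V_k = \E[V \mid X_1,\ldots,X_k]$, so that $V_0 = \E[V]$ and $V_n = V$. The martingale differences $D_k = V_k - V_{k-1}$ then telescope to give $V - \E[V] = \sum_{k=1}^n D_k$, and each $D_k$ has conditional mean zero given $X_1,\ldots,X_{k-1}$.

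The first key step is to show that, conditional on $X_1,\ldots,X_{k-1}$, the difference $D_k$ takes values in an interval of width at most $c$. This is where the independence of the coordinates and the sensitivity hypothesis both enter: because $X_k$ is independent of the past, $V_{k-1}$ is obtained from $V_k$ by averaging over $X_k$ alone, and replacing $X_k$ by any other value changes $f$ by at most $c$; hence the range of $D_k$, as $X_k$ varies with the past fixed, has length at most $c$. Concretely, one exhibits $X_1,\ldots,X_{k-1}$-measurable bounds $L_k \le D_k \le U_k$ with $U_k - L_k \le c$.

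Next I would apply the Chernoff bound: for any $t > 0$,
\begin{equation}
\Prob(V - \E[V] \ge \epsilon) \le e^{-t\epsilon}\, \E\big[e^{t(V - \E[V])}\big] = e^{-t\epsilon}\, \E\Big[e^{t\sum_{k=1}^n D_k}\Big].
\end{equation}
The expectation is then peeled off one martingale difference at a time, from the innermost conditioning outward, using the tower property. At each stage I invoke Hoeffding's lemma: since $D_k$ is conditionally mean-zero and supported in an interval of width at most $c$, its conditional moment generating function satisfies $\E[e^{t D_k} \mid X_1,\ldots,X_{k-1}] \le e^{t^2 c^2/8}$. Iterating over the $n$ coordinates yields $\E[e^{t(V - \E[V])}] \le e^{n t^2 c^2/8}$, so that $\Prob(V - \E[V] \ge \epsilon) \le e^{-t\epsilon + n t^2 c^2/8}$. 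Optimizing the free parameter by choosing $t = 4\epsilon/(nc^2)$ collapses the exponent to $-2\epsilon^2/(nc^2)$, which is exactly the claimed bound.

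The main obstacle is the conditional bounded-difference step combined with the conditional application of Hoeffding's lemma: establishing that the martingale increments lie in a width-$c$ interval given the past, and that the moment generating function bound may be applied conditionally and then composed via the tower property, is where the independence assumption is genuinely used and where care is required. Once these two ingredients are in place, the remaining steps are routine: the telescoping identity, the Chernoff inequality, and the scalar optimization over $t$.
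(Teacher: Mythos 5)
Your proof is correct: it is the standard bounded-differences argument via the Doob martingale $V_k=\E[V\mid X_1,\ldots,X_k]$, conditional Hoeffding's lemma, and optimization of the Chernoff parameter, and the bookkeeping checks out (with $t=4\epsilon/(nc^2)$ the exponent $-t\epsilon+nt^2c^2/8$ indeed collapses to $-2\epsilon^2/(nc^2)$, matching the one-sided bound as stated). Note that the paper does not prove this lemma at all --- it is quoted in the appendix as a known tool --- so there is no in-paper argument to compare against; yours is the canonical proof one would cite for it.
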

\begin{mydef}[Def. 10 of \citep{genAdap}]
	Let $X,Y$ be two random variables jointly distributed according to $\mathcal{P}_{XY}$. Then, the max-information between $X$ and $Y$, is defined as follows:
	\begin{equation}I_\infty(X;Y) = D_\infty(\mathcal{P}_{XY}||\mathcal{P}_{X}\mathcal{P}_{Y}),\end{equation}
	while, the $\beta$-approximate max-information is defined as:
	\begin{equation}I_\infty^\beta(X;Y) = D^\beta_\infty(\mathcal{P}_{XY}||\mathcal{P}_{X}\mathcal{P}_{Y}).\end{equation}
\end{mydef}


\section{Proof of Theorem \ref{adaptML}}
\label{app:proof}
\begin{proof}
	Fix $y \in \mathcal{Y}$, we have the following two distributions on $\mathcal{X}$: $\mathcal{P}_{X|Y=y}, \mathcal{P}_X$. Denote with $\beta=\max_{y\in\Y} \mathcal{P}_X(E_y).$\\
	By definition of R\'enyi Divergence of order $\infty$ we have \citep{RenyiKLDiv}, 
	\begin{align}
	D_{\infty}(\mathcal{P}_{X|Y=y}|| \mathcal{P}_X) = \log \sup_{A\subseteq \X} \frac{\mathcal{P}_{X|Y=y}(A)}{\mathcal{P}_X(A)} = \log \sup_{x: P_{X|Y=y}(x)>0} \frac{\mathcal{P}_{X|Y=y}(x)}{\mathcal{P}_X(x)}
	\end{align}
	We can say that, for every $y\in\Y$:
	\begin{align}\mathcal{P}_{X|Y=y}(E_y) &= \frac{\mathcal{P}_{X|Y=y}(E_y)}{\mathcal{P}_X(E_y)}\cdot \mathcal{P}_X(E_y) \\
	&\leq \exp(D_{\infty}(\mathcal{P}_{X|Y=y}|| \mathcal{P}_X)) \cdot \mathcal{P}_X(E_y) \\ &\leq\exp(D_{\infty}(\mathcal{P}_{X|Y=y}|| \mathcal{P}_X)) \cdot \beta,
	\end{align}
	Taking expectation on both sides we get:	\begin{align}\E_Y(\mathcal{P}_{X|Y=y}(E_y)) &\leq \beta \cdot \E_Y(\exp(D_{\infty}(\mathcal{P}_{X|Y=y}|| \mathcal{P}_X))) \iff \\
	\mathbb{P}(E) &\leq \max_{y\in\Y} \mathcal{P}_X(E_y) \cdot \exp(\Le(X\to Y)) \label{eq: lastStepProof}.
	\end{align}
	Where \eqref{eq: lastStepProof}, in discrete settings, follows from:
	\begin{align}
	\E_Y\big(\exp(D_{\infty}(\mathcal{P}_{X|Y=y}|| \mathcal{P}_X))) &=  \E_Y\bigg(\sup_{A\subseteq \X} \frac{\mathcal{P}_{X|Y=y}(A)}{\mathcal{P}_X(A)}\bigg) \\
	&= \E_Y\bigg(\max_{x: P_{X|Y=y}(x)>0} \frac{\mathcal{P}_{X|Y=y}(x)}{\mathcal{P}_X(x)}\bigg)\\
	&= \sum_{y\in\Y:P_Y(y)>0}\max_{x: P_{X|Y=y}(x)>0}\frac{\mathcal{P}_{X,Y}(x,y)}{\mathcal{P}_X(x)} \\
	&= \sum_{y\in\Y:P_Y(y)>0}\max_{x\in\X : P_X(x)>0}\mathcal{P}_{Y|X=x}(y) \label{tech}\\
	&= \sum_{y\in\Y}\max_{x\in\X : P_X(x)>0}\mathcal{P}_{Y|X=x}(y) \\
	&= \exp(\Le(X\to Y))
	\end{align}
	And \eqref{tech} follows from: 
	\begin{align*}
	   P_Y(y)>0 \text{ and } P_{X|Y}(x|y)>0 &\implies P_X(x)>0 \\
	   P_Y(y)>0 \text{ and } P_{X|Y}(x|y)=0 &\implies P_{Y|X}(y|x)=0 
	\end{align*}
\end{proof}
\section{Maximal Leakage properties}
\label{app:leakageProps}
The following is a proof of Theorem \ref{thComposition2}.
\begin{proof}
	If we consider the second constraint in our assumption and denoting with $Z_y = \mathcal{B}(X,y)$, we get:
	\begin{align}\forall y \in \mathcal{Y} \Le(X\to Z_y) \leq k_2 &\equiv \forall y \in \mathcal{Y} \sum_{z_y\in supp\{Z_y\}} \max_{x\in supp\{X\}}  \mathbb{P}(z_y|x) \leq \exp(k_2) \\ &\equiv \forall y \in \mathcal{Y} \sum_{z_y\in supp\{Z|Y=y\}} \max_{x\in supp\{X\}}  \mathbb{P}(z|x,y) \leq \exp(k_2) .\end{align}
	The last step holds, since every $y$ generates a family of conditional distributions $\mathbb{P}(z_y|x)$ through $\mathcal{B}$ and this probability is equivalent to $\mathbb{P}(z|x,y)$,  with $z=\mathcal{B}(x,y)$.\\
	Using this observation in the conditional leakage of (\ref{ineqLeak}):
	\begin{align}
	\Le(X\to Z|Y) &= \log \max_{y\in supp\{Y\}} \sum_{z \in supp\{Z|Y=y\}} \max_{x \in supp\{X|Y=y\}} \mathbb{P}(z|x,y) 
	\\ &\leq \log \max_{y\in supp\{Y\}} \sum_{z \in supp\{Z|Y=y\}} \max_{x \in supp\{X\}} \mathbb{P}(z|x,y) 
	\\ &\leq \log \max_{y\in supp\{Y\}} \exp(k_2) \\ &= k_2,
	\end{align}
	leading us to the desired bound.
\end{proof}
Here is a proof of Lemma \ref{compositionN}.
\begin{proof}
	\begin{align}
	\Le(X\to (A_1, \ldots, A_n)) &= \Le(X \to \mathbf{A}^n) \\ &= \Le(X \to (\textbf{A}^{n-1},A_n))\\
	&\leq \Le(X\to \textbf{A}^{n-1}) + \Le(X \to A_n |\textbf{A}^{n-1} ) \\ &\leq \Le(X\to \textbf{A}^{n-2}) + \Le(X\to A_{n-1}|\textbf{A}^{n-2}) + \Le(X \to A_n |\textbf{A}^{n-1} ) \\ &\leq \ldots \\ &\leq \Le(X\to A_1) + \Le(X\to A_2|A_1)+ \ldots + \Le(X\to A_n| (A_1,\ldots,A_{n-1})).
	\end{align}
\end{proof}
To conclude a proof of Lemma \ref{MLandDP}.
\begin{proof}
	Let $Y=\mathcal{A}(X)$. Fix some $\hat{\mathbf{x}}\in\mathcal{X}^n$, $\forall\,\mathbf{x}\in\mathcal{X}^n$ we have that $\mathbf{x}$ and $\hat{\mathbf{x}}$ differ in at most $n$ positions and, iteratively applying the definition of Differential Privacy, we have that\\ $\mathbb{P}(Y=y|X=\mathbf{x})\leq e^{\epsilon\cdot n} \mathbb{P}(Y=y|X=\hat{\mathbf{x}}).$\\
	\begin{align}\Le(X\to Y) &= \log \sum_{y\in\mathcal{Y}} \max_{\mathbf{x}\in\mathcal{X}^n} \mathbb{P}(Y=y|X=\mathbf{x})\\ &\leq \log\sum_{y\in\mathcal{Y}} e^{\epsilon\cdot n}\mathbb{P}(Y=y|X=\hat{\mathbf{x}}) \\ &= n\cdot\epsilon \end{align}
\end{proof}
\vskip 0.2in
\bibliography{sample}
\end{document}